\documentclass[review,onefignum,onetabnum,11pt]{siamart171218}



\usepackage{breakcites}
\usepackage{apacite}

\usepackage{lipsum}
\usepackage{amsfonts}
\usepackage{graphicx}
\usepackage{epstopdf}
\usepackage{algorithmic}
\ifpdf
  \DeclareGraphicsExtensions{.eps,.pdf,.png,.jpg}
\else
  \DeclareGraphicsExtensions{.eps}
\fi


\newsiamremark{remark}{Remark}
\newsiamremark{hypothesis}{Hypothesis}
\crefname{hypothesis}{Hypothesis}{Hypotheses}
\newsiamthm{claim}{Claim}

\usepackage[utf8]{inputenc}

\usepackage{mathrsfs,amsmath,amsfonts,amssymb,mathtools,MnSymbol}
\usepackage{extarrows,xfrac,bm,nameref,dsfont,color,cancel}
\usepackage{srcltx}
\usepackage{fancybox}
\usepackage{tikz}
\usepackage{comment}
\usepackage{indentfirst}\setlength{\parindent}{2em}
\usepackage{booktabs}
\usepackage{textcomp,tabularx,colortbl,multirow,array}
\usepackage{tcolorbox}
\usepackage{graphicx}
\usepackage{float}
\usepackage{subfigure}
\graphicspath{{figures/}{experiments/}}
\makeatletter\@addtoreset{equation}{section}\makeatother

\usepackage{enumerate}

\newcommand{\mycirc}{\scalebox{0.85}[0.85]{$\circ$}}
\newcommand{\R}{\ifmmode\mathbb{R}\else$\mathbb{R}$\fi}
\newcommand{\N}{\ifmmode\mathbb{N}\else$\mathbb{N}$\fi}
\newcommand{\Q}{\ifmmode\mathbb{Q}\else$\mathbb{Q}$\fi}

\newcommand{\beforestepskip}{\vspace{5pt}}
\newcommand{\afterstepskip}{\vspace{3pt}}
\newcommand{\tn}[1]{\textnormal{#1}}
\newcommand{\myto}{\mathop{\raisebox{-0pt}{\scalebox{2.6}[1]{$\longrightarrow$}}}}
\newcommand{\NN}{\tn{NN}}
\newcommand{\NNinput}{\tn{\#input}}
\newcommand{\NNwidth}{\, \tn{widthvec}}
\newcommand{\NNlayer}{\, \tn{\#layer}}

\newcommand{\NNmaxwidth}{\, \tn{maxwidth}}

\newcommand{\OO}{\mathcal{O}}
\newcommand{\xx}{{\bm{x}}}

\renewcommand{\epsilon}{\varepsilon}
\def\minmax{\mathop{\operator@font min\hspace*{2pt}max}\nolimits}
\def\maxmin{\mathop{\operator@font max\hspace*{2pt}min}\nolimits}

\DeclareMathOperator*{\argmin}{arg\,min}


\headers{Nonlinear Approximation via Compositions}{Z. Shen, H. Yang, and S. Zhang}

\title{Nonlinear Approximation via Compositions\thanks{
\funding{H. Yang was supported by the startup of the Department of Mathematics at the National University of Singapore.}}}

\author{Zuowei Shen\thanks{Department of Mathematics,  National University of Singapore
  (\email{matzuows@nus.edu.sg}).}
  \and Haizhao Yang\thanks{Department of Mathematics,  National University of Singapore
  (\email{haizhao@nus.edu.sg}).}
\and Shijun Zhang\thanks{Department of Mathematics,  National University of Singapore
  (\email{zhangshijun@u.nus.edu}).}}

\usepackage{makecell}
\usepackage{amsopn}

\makeatother
\newif\ifsubfigurename
\def\subfigurenamefalse{\let\ifsubfigurename\iffalse}
\def\subfigurenametrue{\let\ifsubfigurename\iftrue}
\subfigurenametrue
\newcounter{zsjsubfigure}
\makeatletter
\@addtoreset{zsjsubfigure}{figure}
\makeatother
\newcommand{\fig}[3][1]{\begin{minipage}{#1\textwidth}\centering\includegraphics[width=#3\textwidth]{#2}
\ifsubfigurename\ \\ \stepcounter{zsjsubfigure} (\alph{zsjsubfigure})\fi%
\end{minipage}} 

\newcommand{\twofig}[3][1]{\par \vspace*{3.5pt} \fig[0.49]{#2}{#1}\hspace*{0.02\textwidth}\fig[0.49]{#3}{#1}\vspace*{3.5pt}}


\let\tilde\widetilde
\nolinenumbers
\begin{document}

\maketitle

\begin{abstract}
\textcolor{black}{Given a function dictionary $\mathcal{D}$ and an approximation budget $N\in\mathbb{N}^+$,  nonlinear approximation seeks the linear combination of the best $N$ terms $\{T_n\}_{1\leq n\leq N}\subseteq \mathcal{D}$ to approximate a given function $f$ with the minimum approximation error \[\epsilon_{L,f}:=\min_{\{g_n\}\subseteq{\mathbb{R}},\{T_n\}\subseteq\mathcal{D}} \|f(\bm{x})-\sum_{n=1}^N g_n T_n(\bm{x})\|.\] Motivated by recent success of deep learning, we propose dictionaries with functions in a form of compositions, i.e., \[T(\bm{x})=T^{(L)}\circ T^{(L-1)}\circ\cdots\circ T^{(1)}(\bm{x})\] for all $T\in\mathcal{D}$, and implement $T$ using ReLU feed-forward neural networks (FNNs) with $L$ hidden layers. We further quantify the improvement of the best $N$-term approximation rate in terms of $N$ when $L$ is increased from $1$ to $2$ or $3$ to show the power of compositions. In the case when $L>3$, our analysis shows that increasing $L$ cannot improve the approximation rate in terms of $N$.}

\textcolor{black}{In particular, for any function $f$ on $[0,1]$, regardless of its smoothness and even the continuity, if $f$ can be approximated using a dictionary when $L=1$ with the best $N$-term approximation rate $\epsilon_{L,f}=\OO(N^{-\eta})$, we show that dictionaries with $L=2$ can improve the best $N$-term approximation rate to $\epsilon_{L,f}=\OO(N^{-2\eta})$. We also show that for H{\"o}lder continuous functions of order $\alpha$ on $[0,1]^d$, the application of a dictionary with $L=3$ in nonlinear approximation can achieve an essentially tight best $N$-term approximation rate $\epsilon_{L,f}=\OO( N^{-2\alpha/d})$. Finally, we show that dictionaries consisting of wide FNNs with a few hidden layers are more attractive in terms of computational efficiency than dictionaries with narrow and very deep FNNs for approximating H{\"o}lder continuous functions if the number of computer cores is larger than $N$ in parallel computing. }
\end{abstract}

\begin{keywords}
  Deep Neural Networks, ReLU Activation Function, Nonlinear Approximation, Function Composition, H{\"o}lder Continuity, Parallel Computing.
\end{keywords}


\section{Introduction}

For non-smooth and high-dimensional function approximation, a favorable technique popularized in recent decades is the nonlinear approximation \cite{devore_1998} that does not limit the approximants to come from linear spaces, obtaining sparser representation, cheaper computation, and more robust estimation, and therein emerged the bloom of many breakthroughs in applied mathematics and computer science (e.g., wavelet analysis \cite{doi:10.1137/1.9781611970104}, dictionary learning \cite{DBLP:journals/corr/TariyalMSV16}, data compression and denoising \cite{600614,374249}, adaptive pursuit \cite{Davis94adaptivenonlinear,6810285}, compressed sensing \cite{1614066,4472240}).  

Typically, nonlinear approximation is a two-stage algorithm that designs a good redundant nonlinear dictionary, $\mathcal{D}$, in its first stage, and identifies the optimal approximant as a linear combination of $N$ elements of $\mathcal{D}$ in the second stage:
\begin{equation}
\label{eqn:nla}
f(\bm{x}) \approx g\circ T(\bm{x}) :=\sum_{n=1}^N g_n T_n(\bm{x}),
\end{equation}
where $f(\bm{x})$ is the target function in a Hilbert space $\mathcal{H}$ associated with a norm denoted as $\|\cdot\|_*$, $\{T_n\}\subseteq\mathcal{D}\subseteq\mathcal{H}$, $T$ is a nonlinear map from $\mathbb{R}^d$ to $\mathbb{R}^N$ with the $n$-th coordinate being $T_n$, and $g$ is a linear map from $\mathbb{R}^N$ to $\mathbb{R}$ with the $n$-th coordinate being $g_n\in\mathbb{R}$. The nonlinear approximation seeks $g$ and $T$ such that
\begin{equation}\label{eqn:min}
\{\{T_n\},\{g_n\}\}=\argmin_{\{g_n\}\subseteq{\mathbb{R}},\{T_n\}\subseteq\mathcal{D}}\|f(\bm{x}) - \sum_{n=1}^N g_n T_n(\bm{x})\|_*,
\end{equation}
which is also called the best $N$-term approximation. 
One remarkable approach of nonlinear approximation is based on one-hidden-layer neural networks that give simple and elegant bases of the form $T(\bm{x})=\sigma(\bm{W}\bm{x}+\bm{b})$, where $\bm{W}\bm{x}+\bm{b}$ is a linear transform in $\bm{x}$ with the transformation matrix $\bm{W}$ (named as the weight matrix) and a shifting vector $\bm{b}$ (called bias), and $\sigma$ is a nonlinear function (called the activation function). The approximation
\[
f(\bm{x})\approx \sum_{n=1}^N g_n T_n(\bm{x})= \sum_{n=1}^N g_n\sigma(\bm{W}_n \bm{x} + \bm{b}_n)
\]
includes wavelets pursuits \cite{258082,471413}, adaptive splines \cite{devore_1998,PETRUSHEV2003158}, radial basis functions \cite{radiusbase,HANGELBROEK2010203,Xie2013},  {sigmoidal neural networks \cite{Sig1,Sig2,Sig3,Sig4,Sig5,Cybenko1989ApproximationBS,HORNIK1989359,barron1993}}, etc. For functions in Besov spaces with smoothness $s$, \cite{radiusbase,HANGELBROEK2010203} constructed an $\OO(N^{-s/d})$\footnote{In this paper, we use the big $\OO(\cdot)$ notation when we only care about the scaling in terms of the variables inside $(\cdot)$ and the prefactor outside $(\cdot)$ is independent of these variables.} approximation that is almost optimal \cite{Lin2014} and the smoothness cannot be reduced generally \cite{HANGELBROEK2010203}. For H{\"o}lder continuous functions of order $1$ on $[0,1]^d$, \cite{Xie2013} essentially constructed an $\OO(N^{-\frac{1}{2d}})$ approximation, which is far from the lower bound $\OO(N^{-2/d})$ as we shall prove in this paper. Achieving the optimal approximation rate of general continuous functions in constructive approximation, especially in high dimensional spaces, remains an unsolved challenging problem.

\subsection{Problem Statement}


\textcolor{black}{ReLU FNNs have been proved to be a powerful tool in many fields from various points of view \cite{NIPS2014_5422, 6697897,Bartlett98almostlinear,Sakurai,pmlr-v65-harvey17a,Kearns,Anthony:2009,PETERSEN2018296}, which motivates us to tackle the open problem above via function compositions in the nonlinear approximation using deep ReLU FNNs, i.e., 
\begin{equation}
\label{eqn:nla2}
f(\bm{x}) \approx g\circ T^{(L)}\circ T^{(L-1)}\circ\cdots\circ T^{(1)}(\bm{x}),
\end{equation}
where $T^{(i)}(\bm{x})=\sigma\left(\bm{W}_i \bm{x} + \bm{b}_i\right)$ with $\bm{W}_i\in \R^{N_{i}\times N_{i-1}}$, $\bm{b}_i\in \R^{N_i}$ for $i=1,\dots,L$, $\sigma$ is the ReLU activation function, and $f$ is a H{\"o}lder continuous function. For the convenience of analysis, we consider $N_i=N$ for $i=1,\dots,L$. Let $\mathcal{D}_L$ be the dictionary consisting of ReLU FNNs $g\circ T^{(L)}\circ T^{(L-1)}\circ\cdots\circ T^{(1)}(\bm{x})$ with width $N$ and depth $L$. 
To identify the optimal FNN to approximate $f(\bm{x})$, it is sufficient to solve the following optimization problem
\begin{equation}
\label{eqn:nnopt}
\phi^*=\argmin_{\phi\in\mathcal{D}_L}\|f - \phi \|_*.
\end{equation}} 

\textcolor{black}{The fundamental limit of nonlinear approximation via the proposed dictionary is essentially determined by the approximation power of function compositions in \eqref{eqn:nla2}, which gives a performance guarantee of the minimizer in \eqref{eqn:nnopt}. Since function compositions are implemented via ReLU FNNs, the remaining problem is to quantify the approximation capacity of deep ReLU FNNs, especially their ability to improve the best $N$-term approximation rate in $N$ for any fixed $L$ defined as
\begin{equation}\label{eqn:min2}
\epsilon_{L,f}(N)=\min_{\phi\in\mathcal{D}_L}\|f - \phi \|_*.
\end{equation}}

\textcolor{black}{Function compositions can significantly enrich the dictionary of nonlinear approximation and this idea was not considered in the literature previously due to the expensive computation of function compositions in solving the minimization problem in \eqref{eqn:nnopt}. Fortunately, recent development of efficient algorithms for optimization with compositions (e.g., backpropagation techniques  \cite{werbos1975beyond,Fukushima1980,rumelhart1986psychological} and parallel computing techniques \cite{10.1007/978-3-642-15825-4_10,Ciresan:2011:FHP:2283516.2283603}) makes it possible to explore the proposed dictionary in this paper. Furthermore, with advanced optimization algorithms \cite{Duchi:2011:ASM:1953048.2021068,Johnson:2013:ASG:2999611.2999647,ADAM}, good local minima of \eqref{eqn:nnopt} can be identified efficiently \cite{NIPS2016_6112,DBLP:journals/corr/NguyenH17,opt}.}

\subsection{Related work and contribution}
\textcolor{black}{The main goal in the remaining article is to quantify the best $N$-term approximation rate $\epsilon_{L,f}(N)$ defined in \eqref{eqn:min2} for ReLU FNNs in the dictionary $\mathcal{D}_L$ with a fixed depth $L$ when $f$ is a H{\"o}lder continuous function. This topic is related to several existing approximation theories in the literature, but none of these existing works can be applied to answer the problem addressed in this paper.}

\textcolor{black}{First of all, this paper identifies explicit formulas for the best $N$-term approximation rate 
\begin{equation}
\label{eqn:exf}
\epsilon_{L,f}(N)\leq 
 \begin{cases}
2 \nu N^{-2\alpha}, &\text{when  $L\geq 2$ and $d=1$,}\\
2(2\sqrt{d})^\alpha \nu N^{-2\alpha/d}, &\text{when $L\geq 3$ and  $d>1$},
\end{cases}
\end{equation}
for any $N\in\mathbb{N}^+$ and a H{\"o}lder continuous function $f$ of order $\alpha$ with a constant $\nu$, while existing theories \cite{NIPS2017_7203,bandlimit,yarotsky2017,DBLP:journals/corr/LiangS16,Hadrien,suzuki2018adaptivity,PETERSEN2018296,yarotsky18a,DBLP:journals/corr/abs-1807-00297} can only provide implicit formulas in the sense that the approximation error contains an unknown prefactor and work only for sufficiently large $N$ or $L$ larger than some unknown numbers. For example, the approximation rate in \cite{yarotsky18a} via a narrow and deep ReLU FNN is $c(d) L^{-2\alpha/d}$ with $c(d)$ unknown and for $L$ larger than a sufficiently large unknown number $\mathcal{L}$; the approximation rate in \cite{yarotsky18a} via a wide and shallow ($c_1(d)$-layer) ReLU FNN is $c_2(d) N^{-\alpha/d}$ with $c_1(d)$ and $c_2(d)$ unknown and for $N$ larger than a sufficiently large unknown number $\mathcal{N}$. For another example, given an approximation error $\epsilon$, \cite{PETERSEN2018296} proved the existence of a ReLU FNN with a constant but still unknown number of layers approximating a $C^\beta$ function within the target error. Similarly, given the $\epsilon$ error, \cite{Hadrien,bandlimit,DBLP:journals/corr/abs-1807-00297} estimate the scaling of the network size in $\epsilon$ and the scaling contains unknown prefactors. Given an arbitrary $L$ and $N$, no existing work can provide an explicit formula for the approximation error to guide practical network design, e.g., to guarantee whether the network is large enough to meet the accuracy requirement. This paper provides such formulas for the first time and in fact the bound in these formulas is asymptotically tight as we shall prove later. }

\textcolor{black}{Second, our target functions are H{\"o}lder continuous, while most of existing works aim for a smaller function space with certain smoothness, e.g. functions in $C^\alpha([0,1]^d)$ with $\alpha\geq 1$ \cite{NIPS2017_7203,DBLP:journals/corr/LiangS16,yarotsky2017,DBLP:journals/corr/abs-1807-00297}, band-limited functions \cite{bandlimit}, Korobov spaces \cite{Hadrien}, or Besev spaces \cite{suzuki2018adaptivity}. To the best of our knowledge, there is only one existing article \cite{yarotsky18a} concerning the approximation power of deep ReLU FNNs for $C([0,1]^d)$. However, the conclusion of \cite{yarotsky18a} only works for ReLU FNNs with a fixed width $2d+10$ and a sufficiently large $L$, instead of a fixed $L$ and an arbitrary $N$ as required in the nonlinear approximation (see Figure \ref{fig:existingWork} for the comparison of the conclusion of \cite{yarotsky18a} and this paper).}

\textcolor{black}{As we can see in Figure \ref{fig:existingWork}, the improvement of the best $N$-term approximation rate in terms of $N$ when $L$ is increased from $1$ to $2$ or $3$ is significant, which shows the power of depth in ReLU FNNs. However, in the case when $L>3$, our analysis shows that increasing $L$ cannot improve the approximation rate in terms of $N$. As an interesting corollary of our analysis, for any function $f$ on $[0,1]$, regardless of its smoothness and even the continuity, if $f$ can be approximated using using functions in $\mathcal{D}_1$ with the best $N$-term approximation rate $\epsilon_{L,f}=\OO(N^{-\eta})$, we show that functions in $\mathcal{D}_2$ can improve the best $N$-term approximation rate to $\epsilon_{L,f}=\OO(N^{-2\eta})$. Extending this conclusion for a general $d$ dimensional function is challenging and we leave it as future work.}

\textcolor{black}{ From the point of view of analysis techniques, this paper introduce new analysis methods merely based on the structure of FNNs, while existing works \cite{NIPS2017_7203,bandlimit,yarotsky2017,DBLP:journals/corr/LiangS16,Hadrien,suzuki2018adaptivity,PETERSEN2018296,yarotsky18a,DBLP:journals/corr/abs-1807-00297} rely on constructing FNNs to approximate traditional basis in approximation theory, e.g., polynomials, splines, and sparse grids, which are used to approximate smooth functions.  }

\begin{figure}
    \centering
    \includegraphics[width=0.6\textwidth]{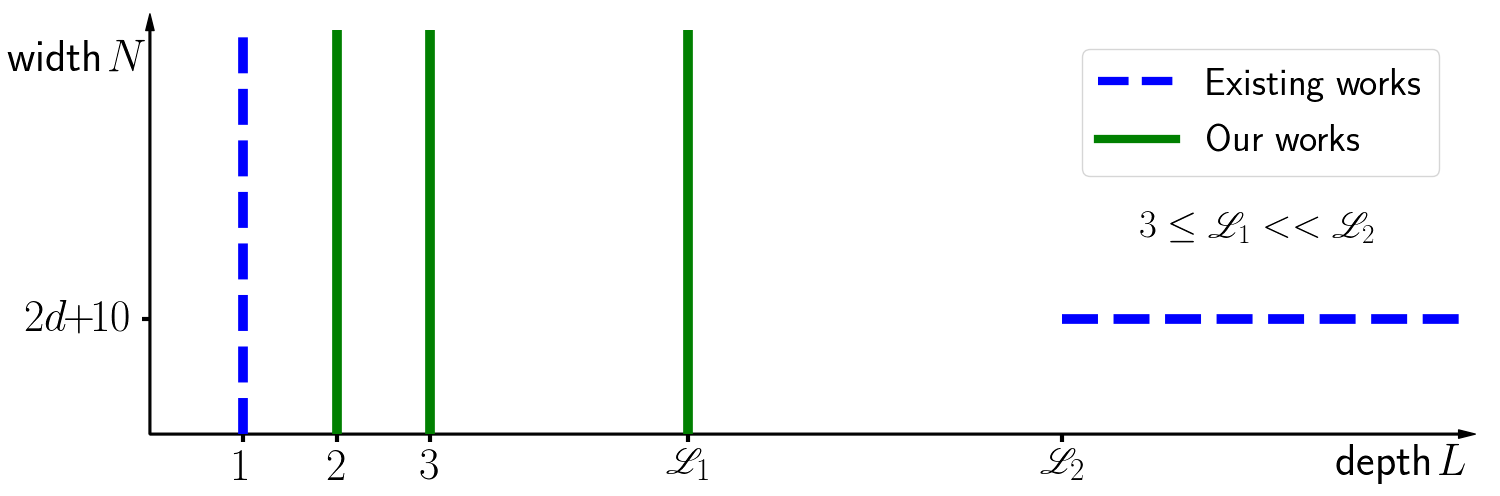}
    \caption{\textcolor{black}{A comparison of existing works and our contribution on the approximation power of ReLU FNNs for H{\"o}lder continuous functions of order $\alpha$. Existing results in two cases: 1) $\OO(N^{-\alpha/d})$ approximation rate for ReLU FNNs with depth $L=1$ and width $N$; 2) $\OO(L^{-2\alpha/d})$ approximation rate for ReLU FNNs with depth $L$ larger than a sufficiently large unknown number $\mathscr{L}_2$  and width $2d+10$. 
Our contribution: $\OO(N^{-2\alpha/d})$ approximation rate for ReLU FNNs width depth $L\geq 3$ and width $N$ in the case of $d>1$ ($L\geq 2$ in the case of $d=1$).} }
    \label{fig:existingWork}
\end{figure}

Finally, we analyze the approximation efficiency of neural networks in parallel computing, a very important point of view that was not paid attention to in the literature. In most applications, the efficiency of deep learning computation highly relies on parallel computation. We show that a narrow and very deep neural network is inefficient if its approximation rate is not exponentially better than wide and shallower networks.  Hence, neural networks with $\OO(1)$ layers are more attractive in modern computational platforms, considering the computational efficiency per training iteration in parallel computing platforms. Our conclusion does not conflict with the current state-of-the-art deep learning research since most of these successful deep neural networks have a depth that is asymptotically $\OO(1)$ relative to the width.

\subsection{Organization}

The rest of the paper is organized as follows. Section \ref{sec:notations} summarizes the notations throughout this paper. Section \ref{sec:main} presents the main theorems while Section \ref{sec:numerical} shows numerical tests in parallel computing to support the claims in this paper. Finally, Section \ref{sec:con} concludes this paper with a short discussion.

\section{Preliminaries}
\label{sec:notations}

For the purpose of convenience, we present notations and elementary lemmas used throughout this paper as follows.

\subsection{Notations}

 \begin{itemize}
     
     \item Matrices are denoted by bold uppercase letters, e.g., $\bm{A}\in\mathbb{R}^{m\times n}$ is a real matrix of size $m\times n$, and $\bm{A}^T$ denotes the transpose of $\bm{A}$. Correspondingly, $\bm{A}(i,j)$ is the $(i,j)$-th entry of $\bm{A}$; $\bm{A}(:,j)$ is the $j$-th column of $\bm{A}$; $\bm{A}(i,:)$ is the $i$-th row of $\bm{A}$.
     \item Vectors are denoted as bold lowercase letters, e.g., $\bm{v}\in \R^n$ is a column vector of size $n$ and $\bm{v}(i)$ is the $i$-th element of $\bm{v}$. $\bm{v}=[v_1,\cdots,v_n]^T=\left[\begin{array}{c}
    \vspace{-3pt} v_1 \\ \vspace{-5pt} \vdots \\ v_n
     \end{array}\right]$ are vectors consisting of numbers $\{v_i\}$ with $\bm{v}(i)=v_i$.
     
     \item  The Lebesgue measure is denoted as $\mu(\cdot)$. 
     
     \item The set difference of $A$ and $B$ is denoted by $A\backslash B$. $A^c$ denotes $[0,1]^d\backslash A$ for any $A\subseteq [0,1]^d$. 
     
     \item For a set of numbers $A$, and a number $x$, $A-x:=\{y-x:y\in A\}$.
     
     \item For any $\xi\in \R$, let $\lfloor \xi\rfloor:=\max \{i: i\le \xi,\ i\in \mathbb{Z}\}$ and $\lceil \xi\rceil:=\min \{i: i\ge \xi,\ i\in \mathbb{Z}\}$.

     \item Assume $\bm{n}\in \N^n$, then $f(\bm{n})=\mathcal{O}(g(\bm{n}))$ means that there exists positive $C$ independent of $\bm{n}$, $f$, and $g$ such that $ f(\bm{n})\le Cg(\bm{n})$ when $\bm{n}(i)$ goes to $+\infty$ for all $i$. 
     
     \item  Define $\tn{Lip}(\nu,\alpha,d)$ as the class of functions defined on $[0,1]^d$ satisfying the uniformly Lipchitz property of order $\alpha$ with a Lipchitz constant $\nu>0$.
     That is, any $f\in \tn{Lip}(\nu,\alpha,d)$ satisfies
     \begin{equation*}
         |f(\bm{x})-f(\bm{y})|\le \nu \|\bm{x}-\bm{y}\|_2^\alpha,\quad \tn{for any $\bm{x},\bm{y}\in [0,1]^d$.}
     \end{equation*}
     
     \item Let CPL($N$) be the set of continuous piecewise linear functions with $N-1$ pieces mapping $[0,1]$ to $\R$. The endpoints of each linear piece are called ``break points" in this paper.
     
     \item Let $\sigma:\R\to \R$ denote the rectified linear unit (ReLU), i.e. $\sigma(x)=\max\{0,x\}$.  With the abuse of notations, we define $\sigma:\R^d\to \R^d$ as $\sigma(\xx)=\left[\begin{array}{c}
          \max\{0,x_1\}  \\
          \vdots \\
          \max\{0,x_d\}
     \end{array}\right]$ for any $\xx=[x_1,\cdots,x_d]^T\in \R^d$.
     
     \item We will use $\tn{NN}$ as a ReLU neural network for short and use Python-type notations to specify a class of $\tn{NN}$'s, e.g., $\tn{NN}(\tn{c}_1;\ \tn{c}_2;\ \cdots;\ \tn{c}_m)$ is a set of ReLU FNN's satisfying $m$ conditions given by $\{\tn{c}_i\}_{1\leq i\leq m}$, each of which may specify the number of inputs ($\NNinput$), the total number of nodes in all hidden layers ($\#$node), the number of hidden layers ($\#$layer), the total number of parameters ($\#$parameter), and the width in each hidden layer (widthvec), the maximum width of all hidden layers (maxwidth) etc. For example, $\tn{NN}(\NNinput=2;\NNwidth=[100,100])$ is a set of NN's $\phi$ satisfying:
     \begin{itemize}
         \item $\phi$ maps from $\R^2$ to $\R$.
         \item $\phi$ has two hidden layers and the number of nodes in each hidden layer is $100$.
     \end{itemize}
     \item $[n]^L$ is short for $[n,n,\cdots,n]\in \N^L$. For example, \[\tn{NN}(\NNinput=d;\NNwidth=[100,100,100])=\tn{NN}(\NNinput=d;\NNwidth=[100]^3).\]
     \item For $\phi\in \tn{NN}(\NNinput=d;\NNwidth=[N_1,N_2,\cdots,N_L])$, if we define $N_0=d$ and $N_{L+1}=1$, then the architecture of $\phi$ can be briefly described as follows:
    \begin{equation*}
    \begin{aligned}
    \bm{x}=\tilde{\bm{h}}_0 \myto^{\bm{W}_1,\ \bm{b}_1} \bm{h}_1\mathop{\longrightarrow}^{\sigma} \tilde{\bm{h}}_1 \ \cdots \
    \myto^{\bm{W}_L,\ \bm{b}_L} \bm{h}_L\mathop{\longrightarrow}^{\sigma} \tilde{\bm{h}}_L \mathop{\myto}^{\bm{W}_{L+1},\ \bm{b}_{L+1}} \phi(\bm{x})=\bm{h}_{L+1},
    \end{aligned}
    \end{equation*}
    where $\bm{W}_i\in \R^{N_{i}\times N_{i-1}}$ and $\bm{b}_i\in \R^{N_i}$ are the weight matrix and the bias vector in the $i$-th linear transform in $\phi$, respectively, i.e., $\bm{h}_i :=\bm{W}_i \tilde{\bm{h}}_{i-1} + \bm{b}_i$ for $\ i=1,2,\cdots,L+1$ and $\tilde{\bm{h}}_i=\sigma(\bm{h}_i)$ for $i=1,2,\cdots,L$.
\end{itemize}
     
\subsection{Lemmas}
\label{sub:lem}

Let us study the properties of ReLU FNNs with only one hidden layer to warm up in Lemma \ref{OneLayerProperty} below. It indicates that $\tn{CPL}(N+1)=\tn{NN}(\NNinput=1;\NNwidth=[N])$ for any $N\in \N^+$. 

\begin{lemma}
    \label{OneLayerProperty}
    Suppose $\phi\in \tn{NN}(\NNinput=1;\NNwidth=[N])$ with an architecture:
    \begin{equation*}
    \begin{aligned}
    x \myto^{\bm{W}_1,\ \bm{b}_1} \bm{h}\mathop{\myto}^{\sigma} \tilde{\bm{h}} \mathop{\myto}^{\bm{W}_2,\ \bm{b}_2} \phi(x).
    \end{aligned}
    \end{equation*}
Then $\phi$ is a continuous piecewise linear function. Let $\bm{W}_1=[1,1,\cdots,1]^T\in\R^{  N\times 1}$, then we have:
    \begin{enumerate}[(1)]
        \item Given a sequence of strictly increasing numbers $x_0$, $x_1$,$\cdots$, $x_N$, there exists $\bm{b}_1\in\R^{  N}$ independent of $\bm{W}_2$ and $\bm{b}_2$ such that the break points of $\phi$ are exactly $x_0$, $\cdots$, $x_N$ on the interval $[x_0,x_N]$\footnote{We only consider the interval $[x_0,x_N]$ and hence $x_0$ and $x_N$ are treated as break points. $\phi(x)$ might not have a real break point in a small open neighborhood of $x_0$ or $x_N$.}.
        \item Suppose $\{x_i\}_{i\in\{0,1,\cdots,N\}}$ and $\bm{b}_1$ are given in (1). 
        Given any sequence $\{y_i\}_{i\in\{0,1,\cdots,N\}}$, there exist $\bm{W}_2$ and $\bm{b}_2$ such that $\phi(x_i)=y_i$ for $i=0,1,\cdots,N$ and $\phi$ is linear on $[x_i,x_{i+1}]$ for $i=0,1,\cdots,N-1$. 
    \end{enumerate}
\end{lemma}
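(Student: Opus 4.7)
The plan is to choose the biases so that the $i$-th hidden unit kinks at the prescribed location $x_{i-1}$, after which the interpolation problem reduces to a triangular linear system. With $\bm{W}_1=[1,\ldots,1]^T$ fixed, the network output has the explicit form
\[
\phi(x) \;=\; b_2 \;+\; \sum_{i=1}^N W_2(i)\,\sigma\bigl(x+b_1(i)\bigr),
\]
so the locations of the kinks are controlled entirely by $\bm{b}_1$, while the slopes and the overall vertical shift are controlled by $\bm{W}_2$ and $b_2$.

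For part (1), I would set $b_1(i):=-x_{i-1}$ for $i=1,\ldots,N$, placing the kink of the $i$-th ReLU at $x_{i-1}$. Fix any subinterval $[x_{j-1},x_j]$ with $j\in\{1,\ldots,N\}$: for $i\le j$ one has $x-x_{i-1}\ge 0$ so $\sigma(x-x_{i-1})=x-x_{i-1}$, while for $i>j$ one has $x-x_{i-1}\le 0$ so the term vanishes. Thus $\phi$ is affine on each such piece and continuous at the junctions, showing $\phi\in\tn{CPL}(N+1)$ with candidate break points on $[x_0,x_N]$ contained in $\{x_0,\ldots,x_N\}$. The chosen $\bm{b}_1$ depends only on the prescribed nodes, not on $\bm{W}_2$ or $b_2$, as required.

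For part (2), evaluating at $x_0$ gives $\phi(x_0)=b_2$ because every ReLU term vanishes there, which forces $b_2:=y_0$. For $j\ge 1$ the identity
\[
\phi(x_j) \;=\; y_0 \;+\; \sum_{i=1}^{j} W_2(i)\,(x_j-x_{i-1}) \;=\; y_j
\]
is a lower-triangular linear system in $W_2(1),\ldots,W_2(N)$ whose diagonal entries $x_j-x_{j-1}$ are strictly positive by the monotonicity hypothesis on $\{x_i\}$, so it is uniquely solvable. Explicitly, writing the target slope on $[x_{j-1},x_j]$ as $s_j:=(y_j-y_{j-1})/(x_j-x_{j-1})$, I would take $W_2(1):=s_1$ and $W_2(j):=s_j-s_{j-1}$ for $j=2,\ldots,N$; a short induction combined with the affine description from Step~1 then verifies both the node constraints $\phi(x_j)=y_j$ and the piecewise linearity on each $[x_{j-1},x_j]$.

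There is no substantive obstacle here. The only point requiring care is interpreting the phrase ``the break points of $\phi$ are exactly $x_0,\ldots,x_N$'' in the sense of the footnote: the endpoints $x_0$ and $x_N$ are admitted as break points by convention even though the construction does not enforce a genuine change of slope at the boundary of the working interval. With this convention the whole argument is a direct computation, and the content of the lemma is really the neat observation that staggering ReLU kinks at the nodes $x_{i-1}$ turns node-interpolation into the inversion of a triangular system.
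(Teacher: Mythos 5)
Your proposal is correct and follows essentially the same route as the paper: the bias choice $b_1(i)=-x_{i-1}$ is exactly the paper's $\bm{b}_1=[-x_0,\dots,-x_{N-1}]^T$ for part (1), and part (2) is the same reduction to a (triangular) linear system that the paper leaves to the reader, which you simply solve explicitly via the slope differences.
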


Part $(1)$ in Lemma \ref{OneLayerProperty} follows by setting $\bm{b}_1=[-x_0,-x_1,\cdots,-x_{N-1}]^T$. The existence in Part $(2)$ is equivalent to the existence of a solution of linear equations, which is left for the reader. Next, we study the properties of ReLU FNNs with two hidden layers. In fact, we can show that the closure of  ${\tn{NN}}(\NNinput=1;\NNwidth=[2m,2n+1])$ contains $ \tn{CPL}(mn+1)$ for any $m,n\in \N^+$, where the closure is in the sense of ${L^p}$-norm for any $p\in [1,\infty)$. The proof of this property relies on the following lemma.
\begin{lemma}
    \label{SquarePointsLemma}
    For any $m,n\in \N^+$, given any $m(n+1)+1$ samples $(x_i,y_i)\in \R^2$ with $x_0<x_1<x_2<\cdots<x_{m(n+1)}$ and $y_i\ge 0$ for $i=0,1,\cdots,m(n+1)$, there exists $\phi\in \tn{NN}(\NNinput=1;\NNwidth=[2m,2n+1])$ satisfying the following conditions:
    \begin{enumerate}[(1)]
        \item $\phi(x_i)=y_i$ for $i=0,1,\cdots,m(n+1)$;
        \item $\phi$ is linear on each interval $[x_{i-1},x_{i}]$ for $i\notin \{(n+1)j:j=1,2,\cdots,m\}$;
        \item $\displaystyle\sup_{x\in[x_0,\,x_{m(n+1)}]} |\phi(x)| \le 3\max_{i\in \{0,1,\cdots,m(n+1)\}}y_i \prod_{k=1}^{n}\left(1+\tfrac{\max\{x_{j(n+1)+n}-x_{j(n+1)+k-1}:j=0,1,\cdots,m-1\} }
        {\min\{x_{j(n+1)+k}-x_{j(n+1)+k-1}:j=0,1,\cdots,m-1\} }\right)$.
    \end{enumerate}
\end{lemma}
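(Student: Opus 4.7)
The plan is to construct $\phi$ by induction on $n$. For each $k = 0, 1, \ldots, n$ I produce a network $\phi_k \in \tn{NN}(\NNinput=1;\,\NNwidth=[2m, 2k+1])$ that interpolates the first $k+1$ within-group positions $\{x_{j(n+1)+i}\}_{0 \le i \le k,\; 0 \le j \le m-1}$ together with the terminal endpoint $x_{m(n+1)}$, is linear on $[x_{j(n+1)+i-1}, x_{j(n+1)+i}]$ for $1 \le i \le k$, and satisfies the sup-norm bound with the truncated product $\prod_{k'=1}^{k}$. Setting $\phi := \phi_n$ yields parts (1)--(3).

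\textbf{Base case $k=0$.} The architecture collapses to $\tn{NN}(\NNinput=1;\NNwidth=[2m, 1])$, whose output has the form $\phi_0(x) = \sigma(v(x))$ for some one-hidden-layer width-$2m$ CPL $v$. By Lemma \ref{OneLayerProperty}, $v$ can be made to interpolate any $2m+1$ prescribed nonnegative values at any prescribed break points; padding the sample set $\{x_{j(n+1)}\}_{j=0}^{m}$ with auxiliary interior nodes (values chosen by linear interpolation to stay nonnegative) makes the outer $\sigma$ transparent at every data point, so $\phi_0$ interpolates the $m+1$ group-boundary samples exactly. No linearity constraint is active and $\sup_x |\phi_0(x)| \le \max_i y_i \le 3\max_i y_i$, matching the empty product in (3).

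\textbf{Inductive step $k-1 \to k$.} I set $\phi_k = \phi_{k-1} + \psi_k$, where $\psi_k$ is a sum of $m$ triangle-like corrections (one per group) supported on $[x_{j(n+1)+k-1}, x_{(j+1)(n+1)}]$, vanishing at the endpoints of its support, taking the residual $y_{j(n+1)+k} - \phi_{k-1}(x_{j(n+1)+k})$ at its apex $x_{j(n+1)+k}$, and arranged so that $\phi_k$ is linear on $[x_{j(n+1)+k-1}, x_{j(n+1)+k}]$. The key economy is that the $m$ simultaneous triangles can be realized with only two additional layer-$2$ neurons by sharing the layer-$1$ basis: one new neuron produces the rising slopes of all $m$ triangles (with the required breakpoints arising as zero-crossings of a linear combination of the layer-$1$ outputs), and the other produces the falling slopes. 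The apex residual of each triangle is at most $\max_i y_i + \sup |\phi_{k-1}|$, and the sup of each triangle over $[x_0, x_{m(n+1)}]$ is amplified by the spacing ratio that appears as the $k$-th factor in the product of (3), yielding $\sup |\phi_k| \le (1 + R_k)\sup |\phi_{k-1}|$ where $R_k = \max_j\{x_{j(n+1)+n} - x_{j(n+1)+k-1}\} / \min_j\{x_{j(n+1)+k} - x_{j(n+1)+k-1}\}$; telescoping over $k = 1, \ldots, n$ gives the stated bound.

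The main obstacle is engineering the shared layer-$1$ basis: the $2m$ layer-$1$ break points must be fixed once and for all (before the induction begins) so that every later correction $\psi_k$ for every $k = 1, \ldots, n$ can be expressed using only the two additional layer-$2$ neurons allotted at its step. A natural choice is to anchor the layer-$1$ break points at the group-boundary coordinates $\{x_{j(n+1)}\}_{j=0}^{m}$, paired to produce clipped ramps over each group, and then to verify that by tuning the two new layer-$2$ weights at each step the zero-crossings of the inner CPL can be steered to land simultaneously at $\{x_{j(n+1)+k-1}\}_j$ and $\{x_{j(n+1)+k}\}_j$ across all $m$ groups. Once this placement is established, conditions (1) and (2) follow by bookkeeping and (3) from the telescoping estimate above.
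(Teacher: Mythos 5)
Your overall plan is the same as the paper's: a shared width-$2m$ first layer, a second layer consisting of one ``base'' neuron plus a pair of neurons for each within-group offset $k=1,\dots,n$, residual corrections handling the positions $x_{j(n+1)+k}$ across all $m$ groups simultaneously, and a telescoping product bound with the factor $R_k$. However, the crux of the lemma is exactly the part you defer to ``verification'': that one fixed first layer admits, for \emph{every} $k$, a correction realizable by only two new second-layer neurons, and your proposed anchoring of the first-layer break points at the $m+1$ group boundaries $\{x_{j(n+1)}\}_{j=0}^{m}$ provably cannot work. With break points only at group boundaries, each second-layer neuron's pre-activation is a single affine function on the whole group $[x_{j(n+1)},x_{(j+1)(n+1)}]$, determined by its two boundary values. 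For the ``rising'' neuron at step $k$, group $j$ forces its affine piece to vanish at $x_{j(n+1)+k-1}$ and take the residual at $x_{j(n+1)+k}$, which pins down its value at the shared boundary $x_{(j+1)(n+1)}$ to a strictly positive number, while group $j+1$'s constraints pin the same boundary value to a nonpositive one (the backward extension of a line vanishing at $x_{(j+1)(n+1)+k-1}$); the two requirements conflict, and the same conflict kills the ``triangle'' shape (returning to zero at the next boundary and staying zero requires the two affine pre-activations to coincide on an interval, hence on the whole next group). The paper avoids this by choosing $2m+1$ break points at $\{x_i: i\in I_1(m,n)\cup(I_1(m,n)-1)\cup\{0\}\}$, i.e.\ at the group boundaries \emph{and} at $x_{j(n+1)+n}$, so each neuron is affine on each active region $[x_{j(n+1)},x_{j(n+1)+n}]$ with independently prescribable endpoint values, and the ``don't-care'' intervals $[x_{j(n+1)+n},x_{(j+1)(n+1)}]$ serve as buffers decoupling consecutive groups. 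Correspondingly, the paper's corrections do not descend after the apex: they keep rising up to $x_{j(n+1)+n}$, which is precisely where the amplification ratio $R_k$ in Condition (3) comes from.

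Two smaller points. First, your sup-norm recursion $\sup|\phi_k|\le(1+R_k)\sup|\phi_{k-1}|$ is not what your own construction yields: the apex residual is bounded by $\max_i y_i+\sup|\phi_{k-1}|$ and (once the correction is forced to keep rising across the active region) its sup is $R_k$ times that, giving $\sup|\phi_k|\le(1+R_k)\sup|\phi_{k-1}|+R_k\max_i y_i$; this still telescopes to $2\max_i y_i\prod_{k}(1+R_k)$, within the stated factor $3$, so it is only an accounting slip, but it should be fixed. Second, in the base case it is cleaner to argue as the paper does: the inner CPL $g_0$ interpolates nonnegative values at its break points and is therefore nonnegative, so $\sigma(g_0)=g_0$; no padding with auxiliary nodes is needed (nor available, since the break points are already fixed by the first layer). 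With the break-point placement corrected and the realizability of each step's pair of neurons written out (this is the content of the paper's Steps 2.2.2--2.2.3), your induction goes through and coincides with the paper's proof.
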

\begin{proof}
    For simplicity of notation, we define $I_0(m,n)\coloneqq\{0,1,\cdots,m(n+1)\}$, $I_1(m,n)\coloneqq\{j(n+1):j=1,2,\cdots,m\}$, and $I_2(m,n)\coloneqq I_0(m,n)\backslash I_1(m,n)$ for any $m,n\in \N^+$.
    Since $\phi\in \tn{NN}(\NNinput=1;\NNwidth=[2m,2n+1])$, the  architecture of $\phi$ is
\begin{equation}
\label{eqn:arch}
\begin{aligned}
x \myto^{\bm{W}_1,\ \bm{b}_1} \bm{h}\mathop{\myto}^{\sigma} \tilde{\bm{h}} \myto^{\bm{W}_2,\ \bm{b}_2} \bm{g}\mathop{\myto}^{\sigma} \tilde{\bm{g}}\mathop{\myto}^{\bm{W}_3,\ \bm{b}_3} \phi(x).
\end{aligned}
\end{equation}
Note that $\bm{g}$ maps $x\in\R$ to $\bm{g}(x)\in\R^{2n+1}$ and hence each entry of $\bm{g}(x)$ itself is a sub-network with one hidden layer. Denote $\bm{g}=[g_0,g_1^+,g_1^-,\cdots,g_{n}^+,g_{n}^-]^T$, then $\{g_0,g_1^+,g_1^-,\cdots,g_{n}^+ ,g_{n}^-\}\subseteq \tn{NN}(\NNinput=1;\NNwidth=[2m])$. Our proof of Lemma \ref{SquarePointsLemma} is mainly based on the repeated applications of Lemma \ref{OneLayerProperty} to determine parameters of $\phi(x)$ such that Conditions $(1)$ to $(3)$ hold.

\beforestepskip
\noindent\textbf{Step $1$}: Determine $\bm{W}_1$ and $\bm{b}_1$. 
\afterstepskip

By Lemma \ref{OneLayerProperty}, $\exists\ \bm{W}_1=[1,1,\cdots,1]^T\in \R^{ 2m\times 1}$ and $\bm{b}_1\in \R^{ 2m\times 1}$ such that sub-networks in $\{g_0,g_1^+,g_1^-,\cdots,g_{n}^+ ,g_{n}^-\}$ have the same set of break points: $\Big\{x_i: i\in I_1(m,n)\cup \big(I_1(m,n)-1\big) \cup \{0\}\Big\}$, no matter what $\bm{W}_2$ and $\bm{b}_2$ are. 

\beforestepskip
\noindent\textbf{Step $2$}:  Determine $\bm{W}_2$ and $\bm{b}_2$. 
\afterstepskip

This is the key step of the proof. Our ultimate goal is to set up $\bm{g}=[g_0,g_1^+,g_1^-,\cdots,g_{n}^+,g_{n}^-]^T$ such that, after a nonlinear activation function, there exists a linear combination in the last step of our network (specified by $\bm{W}_3$ and $\bm{b}_3$ as shown in \eqref{eqn:arch}) that can generate a desired $\phi(x)$ matching the sample points $\{(x_i,y_i)\}_{0\leq i\leq m(n+1)}$. In the previous step, we have determined the break points of $\{g_0,g_1^+,g_1^-,\cdots,g_{n}^+ ,g_{n}^-\}$ by setting up $\bm{W}_1$ and $\bm{b}_1$; in this step, we will identify $\bm{W}_2\in \R^{(2n+1)\times 2m}$ and $\bm{b}_2\in \R^{2n+1}$ to fully determine $\{g_0,g_1^+,g_1^-,\cdots,g_{n}^+ ,g_{n}^-\}$. This will be conducted in two sub-steps.

\beforestepskip
\noindent\textbf{Step $2.1$}: Set up.
\afterstepskip

Suppose $f_0(x)$ is a continuous piecewise linear function defined on $[0,1]$ fitting the given samples $f_0(x_i)=y_i$ for $i\in I_0(m,n)$, and $f_0$ is linear between any two adjacent points of $\{x_i:i\in I_0(m,n)\}$. We are able to choose $\bm{W}_2(1,:)$ and $\bm{b}_2(1)$ such that $g_0(x_i)=f_0(x_i)$ for $i\in I_1(m,n)\cup (I_1(m,n)-1)\cup \{0\}$ by Lemma \ref{OneLayerProperty}, since there are $2m+1$ points in $I_1(m,n)\cup (I_1(m,n)-1)\cup \{0\}$. Define $f_1:=f_0-\tilde{g}_0$, where $\tilde{g}_0=\sigma (g_0)=g_0$ as shown in Equation \eqref{eqn:arch}, since $g_0$ is positive by the construction of Lemma \ref{OneLayerProperty}. Then we have $f_1 (x_i)=f_0 (x_i)-\tilde{g}_0 (x_i)=0$ for $i\in  (I_1 (m,n)-n-1) \cup \{m (n+1)\}$. See Figure \ref{fig:lm} (a) for an illustration of $f_0$, $f_1$, and $g_0$.

\beforestepskip
\noindent\textbf{Step $2.2$}: Mathematical induction.
\afterstepskip

For each $k\in \{1,2,\cdots,n\}$, given $f_k$, we determine $\bm{W}_2 (2k,:)$, $\bm{b}_2 (2k)$, $\bm{W}_2 (2k+1,:)$, and $\bm{b}_2 (2k+1)$, to completely specify 
$g^+_k$ and $g^-_k$, which in turn can determine $f_{k+1}$. Hence, it is only enough to show how to proceed with an arbitrary $k$, since the initialization of the induction, $f_1$, has been constructed in Step $2.1$. See Figure \ref{fig:lm} (b)-(d) for the illustration of the first two induction steps. We recursively rely on the fact of $f_k$ that
\begin{itemize}
    \item $f_k (x_i)=0$ for $i\in \cup_{\ell=0}^{k-1} (I_1 (m,n)-n-1+\ell)\cup\{m (n+1)\}$,
    \item ${f}_{k}$ is linear on each interval $[x_{i-1},x_{i}]$ for $i\in I_2 (m,n)\backslash \{0\}$,
\end{itemize}
\vspace{0.2cm}
to construct $f_{k+1}$ satisfying similar conditions as follows: 
\begin{itemize}
    \item $f_{k+1} (x_i)=0$ for $i\in \cup_{\ell=0}^{k} (I_1 (m,n)-n-1+\ell)\cup\{m (n+1)\}$,
    \item ${f}_{k+1}$ is linear on each interval $[x_{i-1},x_{i}]$ for $i\in I_2 (m,n)\backslash \{0\}$.
\end{itemize}
\vspace{0.2cm}
The induction process for $\bm{W}_2 (2k,:)$, $\bm{b}_2 (2k)$, $\bm{W}_2 (2k+1,:)$, $\bm{b}_2 (2k+1)$, and $f_{k+1}$ can be divided into four parts.

\beforestepskip
\noindent\textbf{Step $2.2.1$}: Define index sets.
\afterstepskip

Let $\Lambda^+_k=\{j: f_k (x_{j (n+1)+k})\ge 0,\  0\le j<m\}$ and $\Lambda^-_k=\{j: f_k (x_{j (n+1)+k})< 0,\ 0\le j<m\}$. The cardinality of $\Lambda^+_k\cup \Lambda^-_k$ is $m$. We will use $\Lambda^+_k$ and $\Lambda^-_k$ to generate $2m+1$ samples to determine CPL functions $g^+_{k}(x)$ and $g^-_{k}(x)$ in the next step.
\begin{figure}[htb]
    \twofig[0.8]{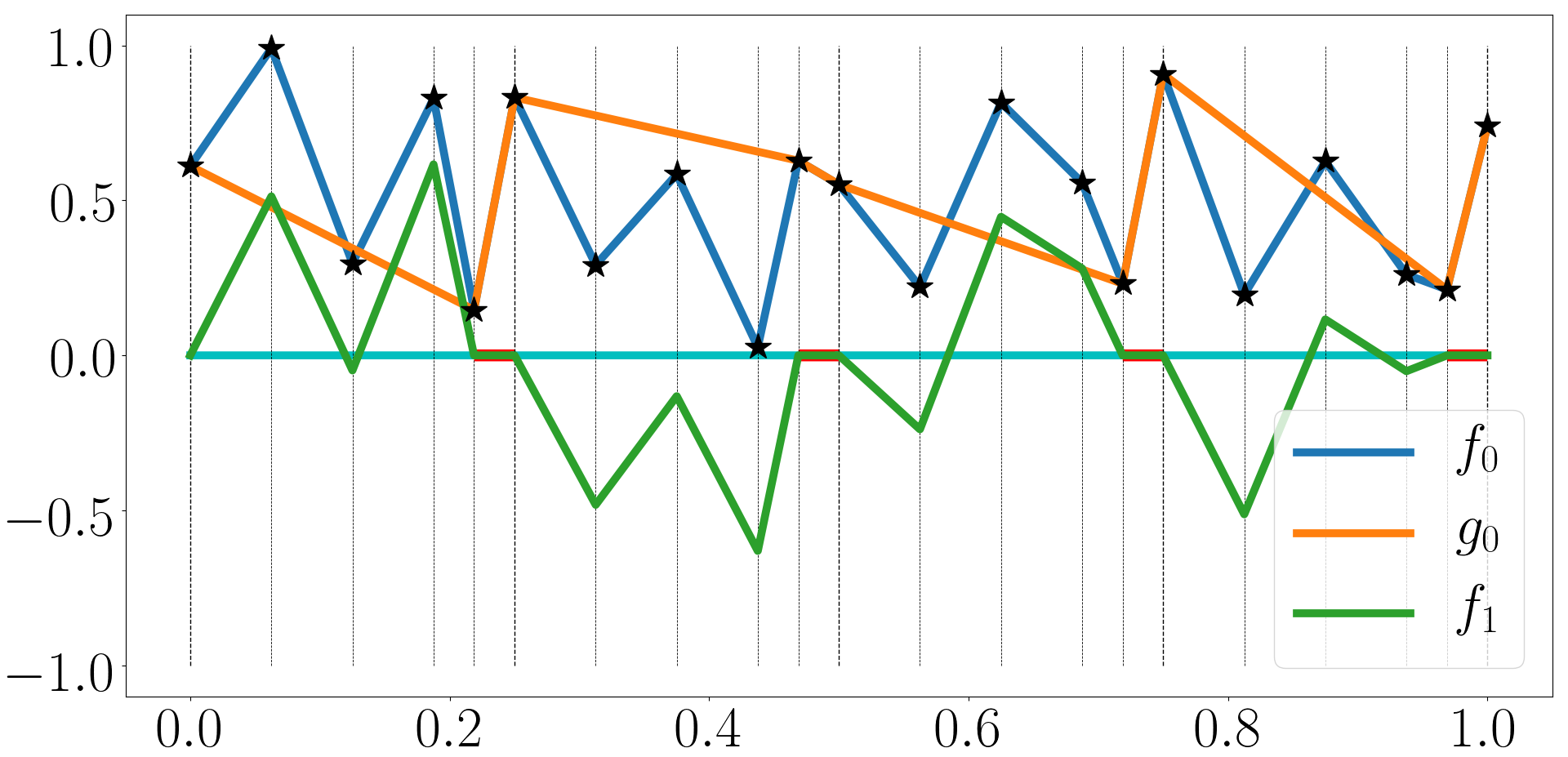}{f1g1plusg1minus.png}            
    \twofig[0.8]{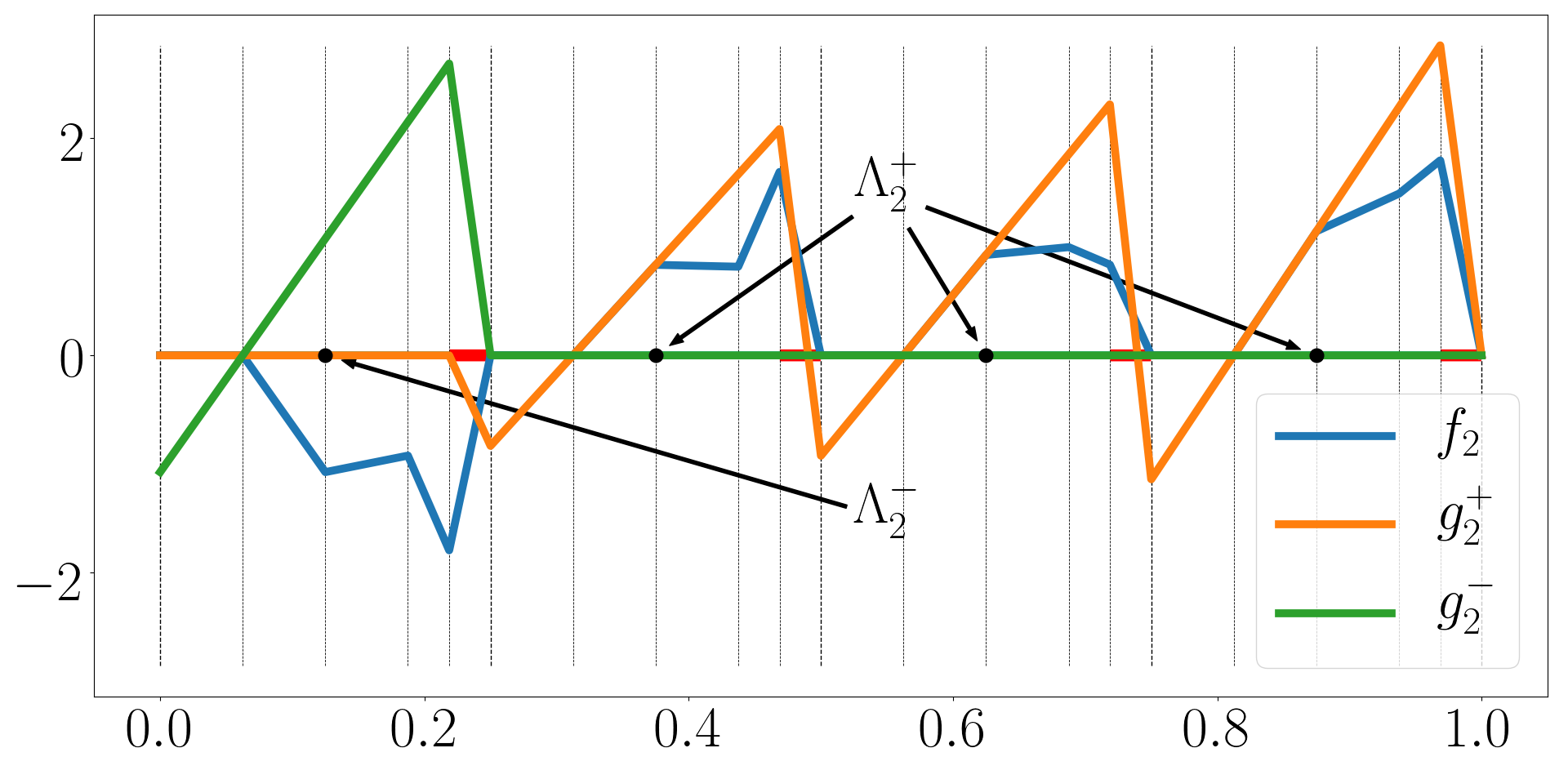}{f3.png}
        \caption{ Illustrations of the proof of Lemma \ref{SquarePointsLemma}, especially Step $2$ of the proof, when $m=n=4$, with the ``don't-care" region in red.  (a) Given samples $\{(x_i,y_i):i=0,1,\cdots,m(n+1)\}$ marked with ``star'' signs, suppose $f_0(x)$ is a CPL function fitting the samples, construct $g_0$ such that $f_1=f_0-\sigma(g_0)$ is closer to $0$ than $f_0$ in the $L^\infty$ sense. (b) Construct $g^+_1$ and $g^-_1$ such that $f_2=f_1-\sigma(g^+_1)+\sigma(g^-_1)$ is closer to $0$ than $f_1$ in the $L^\infty$ sense in a subset of the ``important" region. (c) Construct $g^+_2$ and $g^-_2$ such that $f_3=f_2-\sigma(g^+_2)+\sigma(g^-_2)$ is closer to $0$ than $f_2$ in the $L^\infty$ sense in a larger subset of the ``important" region. (d) The visualization of $f_3$, which is $0$ in the ``important" areas that have been processed and may remain large near the ``don't-care" region. $f_k$ will decay quickly outside the ``don't-care" region as $k$ increases.}
        \label{fig:lm}
\end{figure}

\beforestepskip
\noindent\textbf{Step $2.2.2$}:  Determine $\bm{W}_2 (2k,:)$ and $ \bm{b}_2 (2k)$. 
\afterstepskip

By Lemma \ref{OneLayerProperty}, we can choose $\bm{W}_2 (2k,:)$ and $\bm{b}_2 (2k)$ to fully determine $g^+_{k}(x)$ such that each $g^+_{k} (x_i)$ matches a specific value for $i\in  (I_1 (m,n)-n-1)\cup  (I_1 (m,n)-1)\cup \{m (n+1)\}$. The values of $\big\{g^+_{k} (x_i):i\in(I_1 (m,n)-n-1)\cup  (I_1 (m,n)-1)\cup \{m (n+1)\}\big\}$ are specified as:   
\begin{itemize}
    \item If $j\in \Lambda^+_k$, specify the values of $g^+_{k} (x_{j (n+1)})$ and $g^+_{k} (x_{j (n+1)+n})$ such that
    $g^+_{k}(x_{j (n+1)+k-1})=0$ and $g^+_{k}(x_{j (n+1)+k})=f_k (x_{j (n+1)+k})$. The existence of these values fulfilling the requirements above comes from the fact that $g^+_{k} (x)$ is linear on the interval $[x_{j (n+1)},x_{j (n+1)+n}]$ and $g^+_{k} (x)$ only depends on the values of $g^+_{k}(x_{j (n+1)+k-1})$ and $g^+_{k}(x_{j (n+1)+k})$ on $[x_{j (n+1)},x_{j (n+1)+n}]$. Now it is easy to verify that $\tilde{g}^+_{k}(x):=\sigma (g^+_{k}(x))$ satisfies $\tilde{g}^+_{k} (x_{j (n+1)+k})= f_k (x_{j (n+1)+k})\ge 0$ and $\tilde{g}^+_{k} (x_{j (n+1)+\ell}) = 0$ for $\ell=0,1,\cdots,k-1$, and $\tilde{g}^+_{k}$ is linear on each interval $[x_{j (n+1)+\ell},x_{j (n+1)+\ell+1}]$ for $\ell=0,1,\cdots,n-1$.
    \item If $j\in \Lambda^-_k$, let $g^+_{k} (x_{j (n+1)})=g^+_{k} (x_{j (n+1)+n})=0$. Then $\tilde{g}^+_k(x)=0$ on the interval $[x_{j (n+1)},x_{j (n+1)+n}]$.
    \item Finally, specify the value of $g^+_{k}(x)$ at $x=x_{m(n+1)}$ as $0$.
\end{itemize}

\beforestepskip
\noindent\textbf{Step $2.2.3$}: Determine $\bm{W}_2 (2k+1,:)$ and $\bm{b}_2 (2k+1)$. 
\afterstepskip

Similarly, we choose $\bm{W}_2 (2k+1,:)$ and $\bm{b}_2 (2k+1)$ such that $g^-_{k}(x)$ matches specific values as follows:
\begin{itemize}
    \item If $j\in \Lambda^-_k$, specify the values of $g^-_{k} (x_{j (n+1)})$ and $g^-_{k} (x_{j (n+1)+n})$ such that 
    $g^-_{k}(x_{j (n+1)+k-1})=0$  and $g^-_{k}(x_{j (n+1)+k})=-f_k (x_{j (n+1)+k})$. Then $\tilde{g}^-_{k}(x):=\sigma (g^-_{k}(x))$ satisfies $\tilde{g}^-_{k} (x_{j (n+1)+k})=-f_k (x_{j (n+1)+k})> 0$ and $\tilde{g}^-_{k} (x_{j (n+1)+\ell})=0$ for $\ell=0,1,\cdots,k-1$, and $\tilde{g}^-_{k}(x)$ is linear on each interval $[x_{j (n+1)+\ell},x_{j (n+1)+\ell+1}]$ for $\ell=0,1,\cdots,n-1$.
    \item If $j\in \Lambda^+_k$, let $g^-_{k} (x_{j (n+1)})=g^-_{k} (x_{j (n+)+n})=0$. Then $\tilde{g}^-_k(x)=0$ on the interval $[x_{j (n+1)},x_{j (n+1)+n}]$.
    \item Finally, specify the value of $g^-_{k}(x)$ at $x=x_{m(n+1)}$ as $0$.
\end{itemize} 

\beforestepskip
\noindent\textbf{Step $2.2.4$}: Construct $f_{k+1}$ from $g^+_k$ and $g^-_k$.
\afterstepskip

For the sake of clarity,  the properties of $g^+_{k}$ and $g^-_{k}$ constructed in Step $2.2.3$ are summarized below:
\begin{enumerate}[(1)]
    \item $ f_k (x_i)=\tilde{g}_{k}^+ (x_i)=\tilde{g}_{k}^- (x_i)=0$ for  $i\in \cup_{\ell=0}^{k-1} (I_1 (m,n)-n-1+\ell)\cup\{m (n+1)\}$;
    
    \item If $j\in \Lambda^+_k$,\  $\tilde{g}_{k}^+ (x_{j (n+1)+k})=f_k (x_{j (n+1)+k})\ge 0$  and $\tilde{g}_{k}^- (x_{j (n+1)+k})=0$;

    \item If $j\in\Lambda^-_k$,\  $\tilde{g}_{k}^- (x_{j (n+1)+k})=-f_k (x_{j (n+1)+k})>0$  and $\tilde{g}_{k}^+ (x_{j (n+1)+k})=0$;
    
    \item $\tilde{g}_{k}^+$ and $\tilde{g}_{k}^-$ are linear on each interval $[x_{j (n+1)+\ell},x_{j (n+1)+\ell+1}]$ for $\ell=0,1,\cdots,n-1,\ j\in \Lambda^+_k\cup \Lambda^-_k=\{0,1,\cdots,m-1\}$. In other words, $\tilde{g}_{k}^+$ and $\tilde{g}_{k}^-$ are linear on each interval $[x_{i-1},x_{i}]$ for $i\in I_2 (m,n)\backslash \{0\}$. 
\end{enumerate} 
See Figure \ref{fig:lm} (a)-(c) for the illustration of $g_0$, $g^+_1$, $g^-_1$, $g^+_2$, and $g^-_2$, and to verify their properties as listed above.

Note that $\Lambda^+_k\cup \Lambda^-_k=\{0,1,\cdots,m-1\}$, so $f_k (x_i)-\tilde{g}_{k}^+ (x_i)+\tilde{g}_{k}^- (x_{i})=0$ for $i\in \cup_{\ell=0}^{k} (I_1 (m,n)-n-1+\ell)\cup\{m (n+1)\}$.
Now we define $f_{k+1}:=f_k-\tilde{g}^+_{k}+\tilde{g}^-_{k}$, then
\begin{itemize}
    \item $f_{k+1} (x_i)=0$ for $i\in \cup_{\ell=0}^{k} (I_1 (m,n)-n-1+\ell)\cup\{m (n+1)\}$;
    \item $f_{k+1}$ is linear on each interval $[x_{i-1},x_{i}]$ for $i\in I_2 (m,n)\backslash \{0\}$.
\end{itemize}
See Figure \ref{fig:lm} (b)-(d) for the illustration of $f_1$, $f_2$, and $f_3$, and to verify their properties as listed just above. This finishes the mathematical induction process. As we can imagine based on Figure \ref{fig:lm}, when $k$ increases, the support of $f_k$ shrinks to the ``don't-care" region. 

\beforestepskip
\noindent\textbf{Step $3$}: Determine $\bm{W}_3$ and $\bm{b}_3$.
\afterstepskip

With the special vector function $\bm{g}=[g_0,g_1^+,g_1^-,\cdots,g_{n}^+,g_{n}^-]^T$ constructed in Step $2$, we are able to specify $\bm{W}_3$ and $\bm{b}_3$ to generate a desired $\phi(x)$ with a well-controlled $L^\infty$-norm matching the samples $\{(x_i,y_i)\}_{0\leq i\leq m(n+1)}$.

In fact, we can simply set $\bm{W}_3=[1,1,-1,1,-1,\cdots,1,-1]\in \R^{1\times (2n+1)}$ and $\bm{b}_3=0$, which finishes the construction of $\phi(x)$. The rest of the proof is to verify the properties of $\phi(x)$. Note that $\phi=\tilde{g}_0+\sum_{\ell=1}^{n}\tilde{g}^+_\ell-\sum_{\ell=1}^{n}\tilde{g}^-_\ell$. By the mathematical induction, we have:
\begin{itemize}
    \item $f_{n+1}=f_0-\tilde{g}_0-\sum_{\ell=1}^{n}\tilde{g}^+_\ell+\sum_{\ell=1}^{n}\tilde{g}^-_\ell$;
    
    \item $f_{n+1} (x_i)=0$ for $i\in \cup_{\ell=0}^{n} (I_1 (m,n)-n-1+\ell)\cup\{m (n+1)\}=I_0 (m,n)$;
    
    \item ${f}_{n+1}$ is linear on each interval $[x_{i-1},x_{i}]$ for $i\in I_2 (m,n)\backslash \{0\}$.
\end{itemize}
\vspace{0.25cm}
Hence, $\phi=\tilde{g}_0+\sum_{\ell=1}^{n}\tilde{g}^+_\ell-\sum_{\ell=1}^{n}\tilde{g}^-_\ell=f_0-f_{n+1}$. Then $\phi$ satisfies Conditions $(1)$ and $(2)$ of this lemma. It remains to check that $\phi$ satisfies Condition $(3)$. 

By the definition of $f_1$, we have 
\begin{equation}
\sup_{x\in[x_0,\,x_{m(n+1)}]}|f_1 (x)| \le 2\max\{y_i:i\in I_0 (m,n)\}. 
\end{equation}
By the induction process in Step $2$, for $k\in\{1,2,\cdots,n\}$, it holds that
\begin{equation}
\sup_{x\in[x_0,\,x_{m(n+1)}]}|\tilde{g}^+_k(x)|\le \tfrac{\max\{x_{j(n+1)+n}-x_{j(n+1)+k-1}:j=0,1,\cdots,m-1\} }
{\min\{x_{j(n+1)+k}-x_{j(n+1)+k-1}:j=0,1,\cdots,m-1\} } \sup_{x\in[x_0,\,x_{m(n+1)}]}|f_k(x)|
\end{equation}
and 
\begin{equation}
\sup_{x\in[x_0,\,x_{m(n+1)}]}|\tilde{g}^-_k(x)|\le \tfrac{\max\{x_{j(n+1)+n}-x_{j(n+1)+k-1}:j=0,1,\cdots,m-1\} }
{\min\{x_{j(n+1)+k}-x_{j(n+1)+k-1}:j=0,1,\cdots,m-1\} } \sup_{x\in[x_0,\,x_{m(n+1)}]}|f_k(x)|.
\end{equation}
Since either $\tilde{g}^+_k(x)$ or $\tilde{g}^-_k(x)$ is equal to $0$ on $[0,1]$, we have 
\begin{equation}
\sup_{x\in[x_0,\,x_{m(n+1)}]}|\tilde{g}^+_k(x)-\tilde{g}^-_k(x)|\le \tfrac{\max\{x_{j(n+1)+n}-x_{j(n+1)+k-1}:j=0,1,\cdots,m-1\} }
{\min\{x_{j(n+1)+k}-x_{j(n+1)+k-1}:j=0,1,\cdots,m-1\} } \sup_{x\in[x_0,\,x_{m(n+1)}]}|f_k(x)|.
\end{equation}
Note that $f_{k+1}=f_k-\tilde{g}^+_k+\tilde{g}^-_k$, which means 
\begin{equation}
\begin{split}
\sup_{x\in[x_0,\,x_{m(n+1)}]}|f_{k+1}(x)|&\le 
\sup_{x\in[x_0,\,x_{m(n+1)}]}|\tilde{g}^+_k(x)-\tilde{g}^-_k(x)|+\sup_{x\in[x_0,\,x_{m(n+1)}]}|f_{k}(x)|\\
&\le
\left(\tfrac{\max\{x_{j(n+1)+n}-x_{j(n+1)+k-1}:j=0,1,\cdots,m-1\} }
{\min\{x_{j(n+1)+k}-x_{j(n+1)+k-1}:j=0,1,\cdots,m-1\} }+1\right) \sup_{x\in[x_0,\,x_{m(n+1)}]}|f_k(x)|
\end{split}
\end{equation}
for $k\in \{1,2,\cdots,n\}$. Then we have
\begin{equation*}
\begin{split}
\sup_{x\in[x_0,\,x_{m(n+1)}]}|f_{n+1}(x)|
\le 2\max_{i\in I_0 (m,n)} y_i\prod_{k=1}^{n}\left(\tfrac{\max\{x_{j(n+1)+n}-x_{j(n+1)+k-1}:j=0,1,\cdots,m-1\} }
{\min\{x_{j(n+1)+k}-x_{j(n+1)+k-1}:j=0,1,\cdots,m-1\} }+1\right) .
\end{split}
\end{equation*}
Hence 
\begin{equation*}
\begin{split}
\sup_{x\in[x_0,\,x_{m(n+1)}]}|\phi(x)|
&\le \sup_{x\in[x_0,\,x_{m(n+1)}]}|f_0(x)|+\sup_{x\in[x_0,\,x_{m(n+1)}]}|f_{n+1}(x)|\\
&\le 3\max_{i\in I_0 (m,n)} y_i\prod_{k=1}^{n}\left(\tfrac{\max\{x_{j(n+1)+n}-x_{j(n+1)+k-1}:j=0,1,\cdots,m-1\} }
{\min\{x_{j(n+1)+k}-x_{j(n+1)+k-1}:j=0,1,\cdots,m-1\} }+1\right) .
\end{split}
\end{equation*}
So, we finish the proof.
\end{proof}

\section{Main Results}
\label{sec:main}

We present our main results in this section. First, we quantitatively prove an achievable approximation rate in the $N$-term nonlinear approximation by construction, i.e., the lower bound of the approximation rate. Second, we show a lower bound of the approximation rate asymptotically, i.e., no approximant exists asymptotically following the approximation rate. Finally, we discuss the efficiency of the nonlinear approximation considering the approximation rate and parallel computing in FNNs together. 

\subsection{Quantitative Achievable Approximation Rate}
\label{sub:upp}
    
\begin{theorem}
    \label{thm:up}
    For any $N\in \N^+$ and $f\in \tn{Lip}(\nu,\alpha,d)$ with $\alpha\in(0,1]$, we have\footnote{It is easy to generalize the results in Theorem \ref{thm:up} and Corollary \ref{coll:upp} from $L^1$ to $L^p$-norm for $p\in[1,\infty)$ since $\mu([0,1]^d)=1$.}:
    \begin{enumerate}[(1)]
    \item  If $d=1$, $\exists\ \phi\in \tn{NN}(\NNinput=1;\NNwidth=[2N,2N+1])$ such that
    \[
    \|\phi-f\|_{L^1([0,1])}\le 2 \nu N^{-2\alpha},\quad \tn{ for any }\ N\in \N^+;
    \]
    \item If $d>1$, $\exists\ \phi\in \tn{NN}(\NNinput=d;\NNwidth=[2d\lfloor N^{2/d}\rfloor,2N+2,2N+3])$ such that
    \[\|\phi-f\|_{L^1([0,1]^d)}\le 2(2\sqrt{d})^\alpha \nu N^{-2\alpha/d},\quad \tn{ for any }\ N\in \N^+.\]
    \end{enumerate}
\end{theorem}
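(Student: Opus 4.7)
The approach is to combine Lemma \ref{SquarePointsLemma} with a tailored placement of interpolation nodes (for $d=1$) and with a piecewise-constant quantization of $[0,1]^d$ implemented in the first hidden layer (for $d>1$). In both cases the bulk of the $L^1$ error comes from the H{\"o}lder modulus on the bulk of the domain, while a small ``don't-care'' region---either the intervals indexed by the don't-care set of Lemma \ref{SquarePointsLemma}(2) or the transition zones of the quantization---can be made arbitrarily thin by choosing one free parameter.

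For part (1), I would invoke Lemma \ref{SquarePointsLemma} with $m=n=N$, which produces a network in $\tn{NN}(\NNinput=1;\NNwidth=[2N,2N+1])$ interpolating $N^2+N+1$ prescribed samples. Partition $[0,1]$ into $N$ blocks, each consisting of $N$ equal nice subintervals of length $\delta$, separated by $N$ short don't-care intervals of length $\epsilon$, so that $N^2\delta+N\epsilon=1$. Choose the sample nodes as the endpoints of these subintervals with targets $y_i=f(x_i)-\inf_{[0,1]}f\ge 0$ and absorb the shift back into the final linear layer. By Lemma \ref{SquarePointsLemma}(2), $\phi$ is the piecewise-linear interpolant of $f$ on every nice interval, so $|\phi-f|\le 2\nu\delta^\alpha$ there by a standard H{\"o}lder estimate, contributing at most $2\nu\delta^\alpha(1-N\epsilon)\le 2\nu N^{-2\alpha}(1-N\epsilon)^{1+\alpha}$ to the $L^1$-norm. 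On the $N$ don't-care intervals, Lemma \ref{SquarePointsLemma}(3) yields an explicit bound on $\|\phi\|_\infty$---the ratios inside the product collapse to $N+1-k$ because within each block all nice intervals share the same length $\delta$---and taking $\epsilon$ small enough (as a function of $N,\alpha,\nu,\|f\|_\infty$) absorbs this contribution into the slack $2\nu N^{-2\alpha}[1-(1-N\epsilon)^{1+\alpha}]$, producing the claimed bound.

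For part (2), I would build $\phi=\psi\circ S$ where $S:[0,1]^d\to\R$ is a nearly piecewise-constant quantization implemented by the first hidden layer and $\psi:\R\to\R$ is a one-dimensional CPL implemented by the remaining two hidden layers via Lemma \ref{SquarePointsLemma} with $m=n=N+1$. Set $K=\lfloor N^{2/d}\rfloor$ and tile $[0,1]^d$ into $K^d$ cubes $Q_{\bm{j}}$ of side $1/K$. For each coordinate $x_i$, approximate the step function $x_i\mapsto\lfloor Kx_i\rfloor$ with two ReLUs per step (hence $2K$ neurons per coordinate, $2dK=2d\lfloor N^{2/d}\rfloor$ total), each transition zone having width $O(\epsilon)$. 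A linear combination $S(\bm{x})=\sum_{i=1}^d K^{i-1}\lfloor Kx_i\rfloor$, absorbed into the weights feeding layer two, takes $K^d\le N^2$ distinct integer codes on the cube cores. Since $(N+1)(N+2)+1\ge K^d$, Lemma \ref{SquarePointsLemma} supplies $\psi$ such that $\psi$ maps each integer code to $f(\bm{z}_{\bm{j}})$ at a chosen representative $\bm{z}_{\bm{j}}\in Q_{\bm{j}}$, with $\psi$ linear on the nice intervals bridging consecutive codes. On each cube core the error is bounded by $\nu\,\mathrm{diam}(Q_{\bm{j}})^\alpha=\nu(\sqrt{d}/K)^\alpha\le(2\sqrt{d})^\alpha\nu N^{-2\alpha/d}$ (using $K\ge N^{2/d}/2$); on the thin transition zones, Lemma \ref{SquarePointsLemma}(3) combined with the measure bound $O(\epsilon)$ makes the remaining contribution arbitrarily small.

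The main obstacle is the careful combinatorial alignment in part (2) between the discrete codes produced by $S$ and the nice/don't-care structure of $\psi$: one must position the $K^d$ integer codes among the $(N+1)(N+2)+1$ sample points of Lemma \ref{SquarePointsLemma} so that every transition interval in the image of $S$ lies on a nice linear piece of $\psi$ rather than a don't-care interval where $\|\psi\|_\infty$ can be factorially large. A secondary technical nuisance is the estimate $\lfloor N^{2/d}\rfloor\ge N^{2/d}/2$, which fails for very small $N$ and must be handled either by a direct construction or by slightly enlarging the implicit constant to absorb those finitely many corner cases.
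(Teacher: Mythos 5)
Your construction is essentially the paper's: part (1) applies Lemma \ref{SquarePointsLemma} with $m=n=N$ to near-equispaced nodes separated by thin don't-care gaps, and part (2) composes a per-coordinate two-ReLU-per-step quantizer of total width $2d\lfloor N^{2/d}\rfloor$ with a base-$K$ encoding and an outer one-dimensional network of width $[2N+2,2N+3]$ built from Lemma \ref{SquarePointsLemma}. However, the error bookkeeping in part (1) does not close as written. You bound the bulk error pointwise by $2\nu\delta^\alpha$, so the only room left for the don't-care intervals is the slack $2\nu N^{-2\alpha}\bigl[1-(1-N\epsilon)^{1+\alpha}\bigr]$. Since $1+\alpha\le 2$ gives $1-(1-N\epsilon)^{1+\alpha}\le 2N\epsilon$, that slack is at most $4\nu N^{-2\alpha}N\epsilon$, while your bound for the don't-care contribution is $N\epsilon\,(\|\phi\|_{L^\infty}+\|f\|_{L^\infty})$, where $\|\phi\|_{L^\infty}$ is of order $(N+1)!$ by Lemma \ref{SquarePointsLemma}(3) and is independent of $\epsilon$. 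Both quantities scale linearly in $\epsilon$, so shrinking $\epsilon$ never makes the slack dominate, and the claimed absorption fails for every $\epsilon>0$. The repair is to use the sharp estimate for the linear interpolant: on a nice interval $[a,b]$, $\phi(x)$ is a convex combination of $f(a)$ and $f(b)$, hence $|f(x)-\phi(x)|\le\nu(b-a)^\alpha\le\nu N^{-2\alpha}$, so the bulk costs at most $\nu N^{-2\alpha}$ and an $\epsilon$-independent budget of $\nu N^{-2\alpha}$ remains for the don't-care region; it is then met by taking the gap width small enough, which is exactly the paper's allocation (its condition $N\delta(2+6(N+1)!)\le N^{-2\alpha}$).

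In part (2), the ``main obstacle'' you flag is not actually an obstacle. Because the estimate is in $L^1$, you never need the image of the transition zones under $S$ to avoid the don't-care intervals of $\psi$: the sup bound of Lemma \ref{SquarePointsLemma}(3) holds on the whole interval, and it depends only on the codes and the interpolated values, not on the quantizer's transition width. So you may first fix $\psi$ (hence its factorially large but finite sup) and then shrink the transition width so that the measure of the transition zones times that sup is below the remaining $\epsilon$-independent half of the budget $(2\sqrt d)^\alpha\nu N^{-2\alpha/d}$ --- which is precisely the mechanism you invoke one sentence earlier, and what the paper does; no combinatorial alignment is needed. Note also that only condition (1) of Lemma \ref{SquarePointsLemma} is used there (the quantizer is constant on each cube core, so linearity of $\psi$ between codes is irrelevant), that the interpolated values must be shifted to be nonnegative as in part (1), and that your corner-case worry is vacuous: $\lfloor x\rfloor\ge x/2$ holds for all $x\ge 1$, hence $\lfloor N^{2/d}\rfloor\ge N^{2/d}/2$ for every $N\in\N^+$.
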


\begin{proof}
Without loss of generality, we assume $f(\bm{0})=0$ and $\nu=1$. 

\beforestepskip
\noindent\textbf{Step $1$}: The case $d=1$.
\afterstepskip

Given any $f\in \tn{Lip} (\nu,\alpha,d)$ and $N\in \N^+$, we know $ |f (x)|\le 1 $ for any $x\in [0,1]$ since $f(0)=0$ and $\nu=1$. Set $\bar{f}=f+1\geq 0$, then $0\le \bar{f}(x)\le 2$ for any $x\in [0,1]$. Let $X=\{\frac{i}{N^2}:i=0,1,\cdots,N^2\}\cup\{\frac{i}{N}-\delta:i=1,2,\cdots,N\}$, where $\delta$ is a sufficiently small positive number depending on $N$, and satisfying \eqref{OneDDeltaCond1}. Let us order $X$ as $x_0< x_1<\cdots<x_{N (N+1)}$. By Lemma \ref{SquarePointsLemma}, given the set of samples $\big\{(x_i,\bar{f} (x_i)):i\in\{0,1,\cdots,N (N+1)\}\big\}$, there exists $\phi\in \tn{NN} (\NNinput=1;\NNwidth=[2N,2N+1])$ such that
\begin{itemize}
    \item $\phi (x_i)=\bar{f} (x_i)$ for $i=0,1,\cdots,N(N+1)$;
    \item $\phi$ is linear on each interval $[x_{i-1},x_{i}]$ for $i\notin \{(N+1)j:j=1,2,\cdots,N\}$;
    \item $\phi$ has an upper bound estimation: $\sup \{\phi (x):x\in[0,1]\} \le 6(N+1)!$.
\end{itemize} 
It follows that
\begin{equation*}
    |\bar{f}(x)-\phi(x)|\le (x_i-x_{i-1})^\alpha\le N^{-2\alpha}, \quad \tn{if } x\in [x_{i-1},x_i],\quad \tn{for }
    i\notin \{(N+1)j:j=1,2,\cdots,N\}.
\end{equation*}
Define $H_0=\cup_{i\in \{(N+1)j:j=1,2,\cdots,N\}  } [x_{i-1},x_i]$,  then
\begin{equation}
\label{UpperBoundExceptBadRegionOneD}
|\bar{f} (x)-\phi (x)|\le N^{-2\alpha}, \quad\tn{  for any } x\in [0,1]\backslash H_0,
\end{equation}    
by the fact that $\bar{f}\in \tn{Lip}(1,\alpha,d)$ and points in $X$ are equispaced. By  $\mu(H_0)\le N\delta$, it follows that 
\begin{equation*}
\begin{split}
\|\bar{f}-\phi\|_{L^1([0,1])}&=\int_{H_0}|\bar{f} (x)-\phi (x)|dx+\int_{[0,1]\backslash H_0}|\bar{f} (x)-\phi (x)|dx\\
&\le N\delta  (2+6(N+1)!)+N^2  (N^{-2\alpha}) N^{-2}\le 2N^{-2\alpha},
\end{split}
\end{equation*}
where the last inequality comes from the fact $\delta$ is small enough satisfying
\begin{equation}
\label{OneDDeltaCond1}
N\delta  (2+6(N+1)!)\le N^{-2\alpha}.
\end{equation}
Note that $f- (\phi-1)=f+1-\phi=\bar{f}-\phi$. Hence, $\phi-1\in \tn{NN} (\NNinput=1;\NNwidth=[2N,2N+1])$ and $\|f- (\phi-1)\|\le 2N^{-2\alpha}$. So, we finish the proof for the case $d=1$.

\beforestepskip
\noindent\textbf{ Step $2$}: The case $d>1$.    
\afterstepskip

The main idea is to project the $d$-dimensional problem into a one-dimensional one and use the results proved above. For any $N\in \N^+$, let $n=\lfloor N^{2/d}\rfloor$ and  $\delta$ be a sufficiently small positive number depending on $N$ and $d$, and satisfying \eqref{MultiDDeltaCond1}. We will divide the $d$-dimensional cube into $n^d$ small non-overlapping sub-cubes (see Figure \ref{fig:H1} for an illustration when $d=3$ and $n=3$), each of which is associated with a representative point, e.g., a vertex of the sub-cube. Due to the continuity, the target function $f$ can be represented by their values at the representative points. We project these representatives to one-dimensional samples via a ReLU FNN $\psi$ and construct a ReLU FNN $\bar{\phi}$ to fit them. Finally, the ReLU FNN $\phi$ on the $d$-dimensional space approximating $f$ can be constructed by $\phi=\bar{\phi}\circ\psi$. The precise construction can be found below.

By Lemma \ref{OneLayerProperty}, there exists $\psi_0\in \tn{NN} (\NNinput=1;\NNwidth=[2n])$ such that
\begin{itemize}
    \item $\psi_0 (1)=n-1$, and $\psi_0 (\tfrac{i}{n})=\psi_0 (\tfrac{i+1}{n}-\delta)=i$ for $i=0,1,\cdots,n-1$;
    \item $\psi_0$ is linear between any two adjacent points of $\{\tfrac{i}{n}: i=0,1,\cdots,n\}\cup\{\tfrac{i}{n}-\delta:i=1,2,\cdots,n\}$.
\end{itemize}
\vspace{5pt}
Define the projection map\footnote{The idea constructing such $\psi$ comes from the binary representation.} $\psi$ by
\begin{equation}
\label{DefinePsi}
\psi (\bm{x})=\sum_{i=1}^{d}\tfrac{1}{n^i}\psi_0 (x_i),\quad \tn{for}\ \bm{x}= [x_1,x_2,\cdots,x_d]^T\in [0,1]^d.
\end{equation}
Note that $\psi\in \tn{NN} (\NNinput=1;\NNwidth=[2dn])$. Given $f\in \tn{Lip} (\nu,\alpha,d)$, then $|f (\bm{x})|\le \sqrt{d}$ for ${\bm{x}}\in [0,1]^d$ since $f(\bm{0})=0$, $\nu=1$, and $\alpha\in(0, 1]$. Define $\bar{f}=f+\sqrt{d}$, then $0\le \bar{f} (\bm{x}) \le 2\sqrt{d}$ for ${\bm{x}}\in [0,1]^d$. Hence, we have
\begin{equation*}
\bigg\{\Big(\sum_{i=1}^d \tfrac{\theta_i}{n^i},\,\bar{f} (\tfrac{\bm{\theta}}{n})\Big):{\bm{\theta}}= [\theta_1,\theta_2,\cdots,\theta_d]^T\in\{0,1,\cdots,n-1\}^d \bigg\}\cup \{ (1,0)\}
\end{equation*}
as a set of $n^d+1$ samples of a one-dimensional function.  By Lemma \ref{OneLayerProperty}, $\exists\ \bar{\phi}\in \tn{NN} (\NNinput=1;\NNwidth=[2\lceil n^{d/2}\rceil, 2\lceil n^{d/2} \rceil+1])$ such that 
\begin{equation}
\label{DefinePhiBar}
\bar{\phi}\Big(\sum_{i=1}^d \tfrac{\theta_i}{n^i}\Big)=\bar{f}\left (\tfrac{\bm{\theta}}{n}\right),\quad\tn{for}\ {\bm{\theta}}= [\theta_1,\theta_2,\cdots,\theta_d]^T\in \{0,1,\cdots,n-1\}^d,
\end{equation}
and 
\begin{equation}
\sup_{t\in [0,1]}  |\bar{\phi} (t)|\le 6\sqrt{d}\left(\lceil n^{d/2} \rceil+1\right)!.
\end{equation}
Since the range of $\psi$ on $[0,1]^d$ is a subset of $[0,1]$, $\exists\ \phi\in \tn{NN}\left (\NNinput=d;\NNwidth=[2nd,2\lceil n^{d/2}\rceil,2\lceil n^{d/2}\rceil+1]\right)$
defined via $\phi(\bm{x})=\bar{\phi}\mycirc \psi(\bm{x})$  for ${\bm{x}}\in [0,1]^d$
such that
\begin{equation}
\label{UpperBoundPhiMultiD}
\sup_{{\bm{x}}\in [0,1]^d}  |\phi (\bm{x})|\le 6\sqrt{d}\left(\lceil n^{d/2} \rceil+1\right)!.
\end{equation}
Define $H_1=\cup_{j=1}^d\left\{{\bm{x}}= [x_1,x_2,\cdots,x_d]^T\in [0,1]^d:x_j\in \cup_{i=1}^{n}[\tfrac{i}{n}-\delta,\tfrac{i}{n}]\right\}$, which separates the $d$-dimensional cube into $n^d$ important sub-cubes as illustrated in Figure \ref{fig:H1}. To index these $d$-dimensional smaller sub-cubes, define $Q_{\bm{\theta}}=\big\{{\bm{x}}= [x_1,x_2,\cdots,x_d]^T\in [0,1]^d:x_i\in[\tfrac{\theta_i}{n},\tfrac{\theta_i+1}{n}-\delta], \ i=1,2,\cdots,d\big\}$
for each $d$-dimensional index ${\bm{\theta}}= [\theta_1,\theta_2,\cdots,\theta_d]^T\in \{0,1,\cdots,n-1\}^d$.  
 By \eqref{DefinePsi}, \eqref{DefinePhiBar}, and the definition of $\psi_0$, for any $\bm{x}= [x_1,x_2,\cdots,x_d]^T\in Q_{\bm{\theta}}$, we have $\phi (\bm{x})=\bar{\phi} (\psi (\bm{x}))
=\bar{\phi}\Big (\sum_{i=1}^d \tfrac{1}{n^i}\psi_0 (x_i)\Big)
=\bar{\phi}\Big (\sum_{i=1}^d \tfrac{1}{n^i}\theta_i\Big)
=\bar{f}\left (\tfrac{\bm{\theta}}{n}\right)$. Then 
\begin{equation}
\label{UpperBoundExceptBadRegionMultiD}
|\bar{f} (\bm{x})-\phi (\bm{x})|=|\bar{f} (\bm{x})-\bar{f}\left (\tfrac{\bm{\theta}}{n}\right)|\le  (\sqrt{d}/n)^{\alpha}, \quad \tn{for any}\ {\bm{x}}\in Q_{\bm{\theta}}.
\end{equation}

\begin{figure}[ht!]  \twofig[0.5]{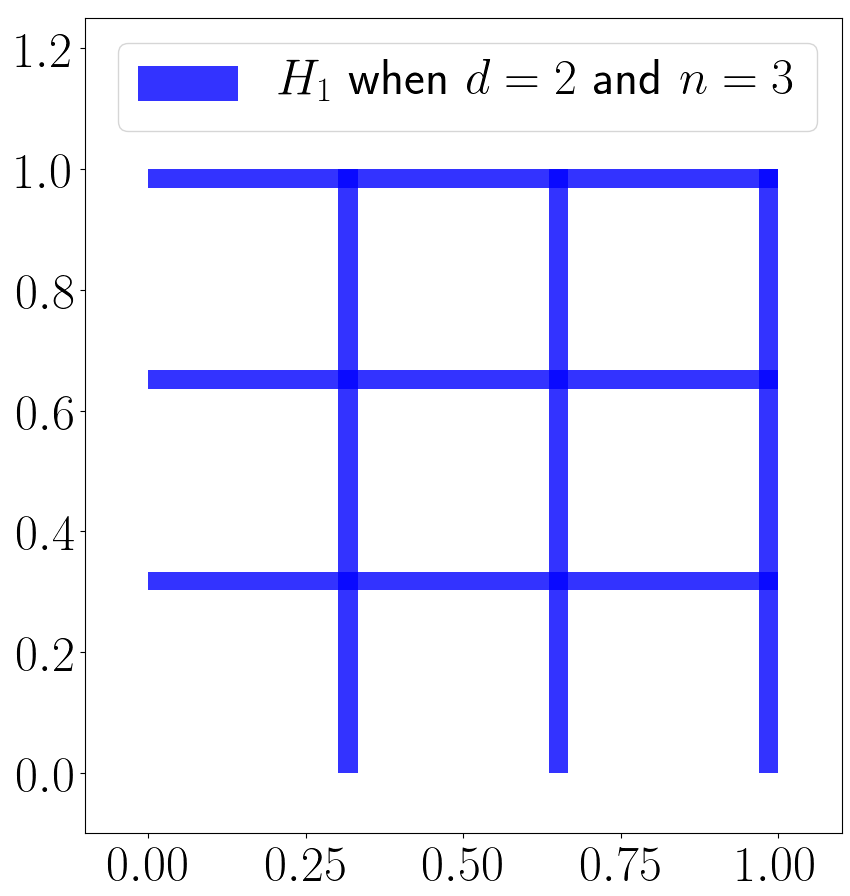}{cubes.png}        
    \caption{ An illustration of $H_1$ and $n^d$ small non-overlapping sub-cubes that $H_1$ separates when $n=3$. (a) When $d=2$, $H_1$ in blue separates $[0,1]^2$ into $n^d=9$ sub-cubes. (b) When $d=3$, $H_1$ (no color) separates $[0,1]^3$ into $n^d=27$ sub-cubes in red.}
    \label{fig:H1}
\end{figure}

Because $\mu (H_1)\le dn\delta$, $[0,1]^d=\cup_{\bm{\theta}\in \{0,1,\cdots,n-1\}^d}Q_{\bm{\theta}} \cup H_1$, \eqref{UpperBoundPhiMultiD}, and \eqref{UpperBoundExceptBadRegionMultiD}, we have
\begin{equation*}
\begin{split}
\|\bar{f}-\phi\|_{L^1([0,1]^d)}
&=\int _{H_1} |\bar{f}-\phi| d{\bm{x}}+\int _{[0,1]^d\backslash H_1} |\bar{f}-\phi| d{\bm{x}}\\
&\le \mu (H_1) \Big(2\sqrt{d}+6\sqrt{d}\big(\lceil n^{d/2} \rceil+1\big)!\Big)
+\sum_{\bm{\theta}\in \{0,1,\cdots,n-1\}^d} \int_{Q_{\bm{\theta}}}|\bar{f}-\phi| d{\bm{x}}\\
&\le 2n\delta d\sqrt{d}\Big(1+3\big(\lceil n^{d/2} \rceil+1\big)!\Big)+\sum_{\bm{\theta}\in \{0,1,\cdots,n-1\}^d} (\sqrt{d}/n)^{\alpha} \mu (Q_{\bm{\theta}})\\
&\le  2d^{\alpha/2}n^{-\alpha},
\end{split}
\end{equation*}
where the last inequality comes from the fact that $\delta$ is small enough such that

\begin{equation}
\label{MultiDDeltaCond1}
2n\delta d\sqrt{d}\Big(1+3\big(\lceil n^{d/2} \rceil+1\big)!\Big)\le d^{\alpha/2}n^{-\alpha}.
\end{equation}

Note that $f- (\phi-\sqrt{d})=\bar{f}-\phi$. Hence, $\phi-\sqrt{d}\in \tn{NN} (\NNinput=d;\NNwidth=[2nd,2\lceil n^{d/2}\rceil,2\lceil n^{d/2}\rceil+1])$ and $\|f- (\phi-\sqrt{d})\|_{L^1([0,1]^d)}\le 2d^{\alpha/2}n^{-\alpha}$. Since $n=\lfloor N^{2/d}\rfloor$, we have $\lceil n^{d/2}\rceil\le N+1$. Therefore, 
\begin{equation*}
\begin{split}
\phi-\sqrt{d}&\in \tn{NN} (\NNinput=d;\NNwidth=[2d\lfloor N^{2/d}\rfloor,2N+2,2N+3])\\
\end{split}
\end{equation*}    
and 
\[\|f- (\phi-\sqrt{d})\|_{L^1([0,1]^d)}\le 2d^{\alpha/2}n^{-\alpha}= 2d^{\alpha/2}\lfloor N^{2/d}\rfloor^{-\alpha}\le 2d^{\alpha/2} (N^{2/d}/2)^{-\alpha}=2 (2\sqrt{d})^\alpha N^{-2\alpha/d},\] where the second inequality comes from the fact $\lfloor x\rfloor\ge \tfrac{x}{2}$ for $x\in [1,\infty)$. So, we finish the proof when $d>1$.
\end{proof}

Theorem \ref{thm:up} shows that for $f\in \tn{Lip}(\nu,\alpha,d)$ ReLU FNNs with two or three function compositions can achieve the approximation rate $\OO(\nu N^{-2\alpha/d})$. 
Following the same proof as in Theorem \ref{thm:up}, we can show that:

\begin{corollary}
\label{coll:upp}
$\forall\ m,n\in \N^+$, the closure of $\tn{NN}(\NNinput=1;\NNwidth=[2m,2n+1])$ contains $\tn{CPL}(mn+1)$ in the sense of $L^{1}$-norm.
\end{corollary}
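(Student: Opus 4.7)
The plan is to reduce Corollary~\ref{coll:upp} to a direct application of Lemma~\ref{SquarePointsLemma} by aligning the $n$ ``good'' (linear) intervals inside each of the $m$ blocks with the $mn$ linear pieces of the target CPL function. Let $f\in\tn{CPL}(mn+1)$ have break points $0=t_0<t_1<\cdots<t_{mn}=1$, pick any $M\ge\sup_{[0,1]}|f|$ so that $\bar f:=f+M\ge 0$, and extend $\bar f$ linearly past $1$. Group the $mn$ pieces of $f$ into $m$ chunks, chunk $j$ spanning $[t_{jn},t_{(j+1)n}]$. For each small $\delta>0$, I would use the $m(n+1)+1$ samples $x_{j(n+1)}=t_{jn}+\delta\cdot\mathbf{1}_{\{j\ge 1\}}$, $x_{j(n+1)+i}=t_{jn+i}$ for $1\le i\le n$, and $x_{m(n+1)}=1+\delta$, with $y_i=\bar f(x_i)\ge 0$. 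Lemma~\ref{SquarePointsLemma} then produces $\phi_\delta\in\tn{NN}(\NNinput=1;\NNwidth=[2m,2n+1])$ that interpolates these samples and is linear on every good interval.

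The crucial observation is that each good interval $[x_{j(n+1)+k-1},x_{j(n+1)+k}]$ sits inside the single linear piece $[t_{jn+k-1},t_{jn+k}]$ of $\bar f$, so $\phi_\delta$ and $\bar f$ are both affine there with matching endpoint values, whence $\phi_\delta\equiv\bar f$ on every good interval. The complement inside $[0,1]$ consists only of the ``don't-care'' intervals $[t_{jn},t_{jn}+\delta]$ for $j=1,\dots,m-1$, of total measure $(m-1)\delta$ (the last don't-care $[1,1+\delta]$ lies outside $[0,1]$). Since $\phi_\delta-M$ belongs to the same network class (just shift the output bias by $-M$), it suffices to show $\|(\phi_\delta-M)-f\|_{L^1([0,1])}\le (m-1)\delta\bigl(2M+\|\phi_\delta\|_{L^\infty[0,1]}\bigr)\to 0$, which reduces the whole argument to controlling $\|\phi_\delta\|_{L^\infty[0,1]}$ uniformly in $\delta$.

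The one nontrivial point, and the reason for the particular alignment above, is keeping the bound of Condition~$(3)$ of Lemma~\ref{SquarePointsLemma} uniform in $\delta$. Since $\max_i y_i\le 2M$ and each max-difference is bounded by $1$, everything hinges on whether the denominators $\min_j\{x_{j(n+1)+k}-x_{j(n+1)+k-1}\}$ stay bounded away from $0$ as $\delta\to 0^+$. For $k\ge 2$ the difference equals $t_{jn+k}-t_{jn+k-1}$, which is $\delta$-free and positive; for $k=1$ it equals $t_{jn+1}-t_{jn}-\delta\,\mathbf{1}_{\{j\ge 1\}}$, which remains at least $\tfrac12\min_j(t_{jn+1}-t_{jn})>0$ once $\delta$ is sufficiently small. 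Hence $\|\phi_\delta\|_{L^\infty}$ is bounded by a constant depending only on $f,m,n$, and letting $\delta\to 0^+$ places $f$ in the $L^1$-closure of $\tn{NN}(\NNinput=1;\NNwidth=[2m,2n+1])$.
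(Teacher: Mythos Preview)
Your proof is correct and follows essentially the same approach the paper intends (it defers to the $d=1$ case of Theorem~\ref{thm:up}): apply Lemma~\ref{SquarePointsLemma} with sample points aligned so that each good interval sits inside a single linear piece of $f$, then let the don't-care intervals shrink while invoking the $\delta$-uniform $L^\infty$ bound from Condition~(3). One minor caveat: if $M=\sup_{[0,1]}|f|$ is attained with $f(1)=-M$, the linear extension past $1$ can make $\bar f(1+\delta)<0$, violating the nonnegativity hypothesis of Lemma~\ref{SquarePointsLemma}; simply take $M>\sup|f|$ (or set $y_{m(n+1)}:=\bar f(1)\ge 0$, which does not affect your Condition~(3) estimate).
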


An immediate implication of Corollary \ref{coll:upp} is that, for any function $f$ on $[0,1]$, if $f$ can be approximated via one-hidden-layer ReLU FNNs with an approximation rate $\OO(N^{-\eta})$ for any $\eta>0$, the rate can be improved to $\OO(N^{-2\eta})$ via one more  composition. 

\subsection{Asymptotic Unachievable  Approximation Rate}
\label{sub:low}
In Section \ref{sub:upp}, we have analyzed the approximation capacity of ReLU FNNs in the nonlinear approximation for general continuous functions by construction. In this section, we will show that the construction in Section \ref{sub:upp} is essentially and asymptotically tight via showing the approximation lower bound in Theorem \ref{thm:low} below.


\begin{theorem}
    \label{thm:low}
    For any $\ L\in \N^+$, $\rho>0$, and $C>0$, $\exists\ f\in \tn{Lip}(\nu,\alpha,d)$  with $\alpha\in(0,1]$,  for all $N_0>0$, there exists $N\ge N_0$ such that 
    \begin{equation*}
    \inf_{\phi\in \textnormal{NN}(\NNinput=d;\NNmaxwidth\le N;\NNlayer\le L)} \|\phi-f\|_{L^\infty([0,1]^d)}\ge C\nu N^{-(2\alpha/d+\rho)}.
    \end{equation*}
\end{theorem}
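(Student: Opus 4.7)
The strategy is a complexity-versus-entropy argument comparing the effective ``size'' of $\mathcal{F}_{N,L}:=\tn{NN}(\NNinput=d;\NNmaxwidth\le N;\NNlayer\le L)$ against the Kolmogorov $\epsilon$-entropy of $\tn{Lip}(\nu,\alpha,d)$ in $L^\infty([0,1]^d)$. Write $\epsilon_N:=C\nu N^{-(2\alpha/d+\rho)}$ and assume for contradiction that for every $f\in\tn{Lip}(\nu,\alpha,d)$ there exists $N_0(f)\in\N^+$ with $\inf_{\phi\in\mathcal{F}_{N,L}}\|\phi-f\|_{L^\infty([0,1]^d)}<\epsilon_N$ for all $N\ge N_0(f)$.

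First I would upper bound the effective cardinality of $\mathcal{F}_{N,L}$ at resolution $\epsilon_N$. The number of real parameters is $P=O(N^2 L)$. Since the target $f$ is bounded on $[0,1]^d$, one may restrict without loss of approximation power to networks whose weights and biases lie in $[-B_N,B_N]$ with $B_N$ polynomial in $N$; a routine Lipschitz-in-parameter estimate then gives $\|\phi_\theta-\phi_{\theta'}\|_\infty\le \tn{poly}(N)\,\|\theta-\theta'\|_\infty$ on $[0,1]^d$. Quantizing every parameter at resolution $\delta=\epsilon_N/\tn{poly}(N)$ therefore yields an $\epsilon_N$-net for the restricted class of cardinality at most $(c_L\tn{poly}(N)/\epsilon_N)^{P}$. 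Pairing this with the classical volumetric lower bound $\log\mathcal{N}\bigl(\epsilon,\tn{Lip}(\nu,\alpha,d),\|\cdot\|_\infty\bigr)\ge c\,\nu^{d/\alpha}\,\epsilon^{-d/\alpha}$, the contradiction hypothesis forces $P\log(\tn{poly}(N)/\epsilon_N)\ge c\,\nu^{d/\alpha}\,\epsilon_N^{-d/\alpha}$. Substituting the values of $\epsilon_N$ and $P=O(N^2 L)$ collapses this to $c_1 N^2 L\log N\ge c_2 N^{2+\rho d/\alpha}$, which is false for all sufficiently large $N$ since $\rho,L,\alpha,d$ are fixed.

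To extract a single bad $f$ from this class-wide inequality, I would invoke the Baire category theorem on the complete metric space $(\tn{Lip}(\nu,\alpha,d),\|\cdot\|_\infty)$. Failure of the theorem's conclusion is equivalent to $\tn{Lip}(\nu,\alpha,d)=\bigcup_{N_0\in\N^+}A_{N_0}$, where $A_{N_0}$ collects those $f$ for which the stated approximation bound holds for every $N\ge N_0$. Because $g\mapsto\|\phi-g\|_\infty$ is $1$-Lipschitz in $g$, each $\overline{A_{N_0}}$ satisfies the non-strict analogue of the bound. Baire then produces some $\overline{A_{N_0^*}}$ containing an $\|\cdot\|_\infty$-ball within $\tn{Lip}(\nu,\alpha,d)$, and since such a ball still has $\epsilon$-entropy of order $\epsilon^{-d/\alpha}$, the cardinality bound above delivers the same contradiction applied inside that ball.

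\textbf{Main obstacle.} The principal technical hurdle is the first step: controlling the Lipschitz dependence of $\phi(\bm{x})$ on its parameters while justifying a polynomial a priori bound $B_N$ on the parameter magnitudes. One has to argue uniformly that networks with very large weights cannot be competitive for approximating bounded Lipschitz targets on $[0,1]^d$, an assertion that is intuitively clear but fiddly to pin down. A cleaner alternative that bypasses parameter quantization altogether is to invoke the pseudo-dimension bound $\tn{Pdim}(\mathcal{F}_{N,L})=\Theta(PL\log P)$ of \cite{pmlr-v65-harvey17a} paired with the DeVore--Howard--Micchelli style fact that a class of pseudo-dimension $V$ cannot $L^\infty$-approximate every member of $\tn{Lip}(\nu,\alpha,d)$ better than $c_3\nu V^{-\alpha/d}$; this directly yields $\epsilon_N\gtrsim \nu\,N^{-2\alpha/d}(\log N)^{-\alpha/d}$ on the full class, and absorbing the polylog factor into the arbitrary $N^{-\rho}$ slack together with the Baire step above completes the proof.
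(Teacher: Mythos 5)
Your primary (covering-number) route breaks at its very first step, and this is not a ``routine but fiddly'' point: the reduction ``without loss of approximation power to weights in $[-B_N,B_N]$ with $B_N$ polynomial in $N$'' is exactly what an entropy argument is not entitled to assume. The contradiction hypothesis hands you, for each target $f$, \emph{some} near-best network $\phi$, and nothing forces its weights to be polynomially bounded: a function within $\epsilon_N$ of a $1$-Lipschitz function in the sup norm can still have arbitrarily steep pieces, and you are not free to replace the given $\phi$ by a tamer one unless you first prove that bounded-weight networks of the same width and depth approximate equally well --- which is essentially a theorem of its own, not a Lipschitz-in-parameters estimate. Without a weight bound, the class $\tn{NN}(\NNinput=d;\NNmaxwidth\le N;\NNlayer\le L)$ has \emph{infinite} $\epsilon$-entropy in $L^\infty([0,1]^d)$ even after intersecting with a bounded sup-norm ball (e.g. the ramps $\min\{\max\{M(x_1-a),0\},1\}$ for $a\in[0,1]$ and $M\to\infty$ are pairwise $1$-separated), so the comparison with the Kolmogorov entropy of $\tn{Lip}(\nu,\alpha,d)$ collapses. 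This is precisely why the paper does not quantize parameters but instead uses the nearly tight VC-dimension bound of \cite{pmlr-v65-harvey17a}, which is insensitive to weight magnitudes: assuming the claimed rate, one places $\pm$ Lipschitz bumps on a grid of $\lfloor N^{2/d+\rho/(2\alpha)}\rfloor^d$ points whose amplitude, of order (spacing)$^{\alpha}\approx N^{-(2\alpha/d+\rho/2)}$, dominates the assumed error $C\nu N^{-(2\alpha/d+\rho)}$, so the approximating networks shatter the grid, contradicting $\tn{VCDim}=\OO(N^2\ln N)$ for fixed $L$.

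Your fallback route --- the pseudo-dimension bound of \cite{pmlr-v65-harvey17a} combined with the fact that a class of pseudo-dimension $V$ cannot uniformly $L^\infty$-approximate $\tn{Lip}(\nu,\alpha,d)$ better than order $\nu V^{-\alpha/d}$ --- is essentially the paper's proof (that ``fact'' is established by exactly the shattering construction just described), so adopting it puts you on the paper's route rather than a new one. Your Baire-category step does address a genuine subtlety that the paper's sketch glosses over: in the negated statement $N_0$ depends on $f$, while the shattering family depends on $N$, so one needs to restore uniformity before invoking the VC bound at a single large $N$; passing to the closed sets $\overline{A_{N_0}}$ in the complete metric space $(\tn{Lip}(\nu,\alpha,d),\|\cdot\|_{L^\infty})$, extracting a ball, and running the shattering argument on bump perturbations of the ball's center (with thresholds equal to the center's values, i.e. a pseudo-dimension rather than plain VC shattering, and with the center slightly shrunk so the perturbations stay in $\tn{Lip}(\nu,\alpha,d)$) is a legitimate way to do this; the paper defers such details to the complete proof in \cite{Shijun}.
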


The proof of Theorem \ref{thm:low} relies on the  nearly-tight VC-dimension bounds of ReLU FNNs given in \cite{pmlr-v65-harvey17a} and is similar to Theorem $4$ of \cite{yarotsky2017}. Hence, we only sketch out its proof and a complete proof can be found in \cite{Shijun}. 

\begin{proof}
We will prove this theorem by contradiction.
Assuming that Theorem \ref{thm:low} is not true, we can show the following claim, which will lead to a contradiction in the end.
\begin{claim}
    \label{thm:lowClaim}
    There exist
     $ L\in \N^+$,  $\rho>0$, and $C>0$, $\forall\ f\in \tn{Lip} (\nu,\alpha,d)$ with $\alpha\in (0,1]$,  then $\exists\ N_0>0$, for all $N\ge N_0$,  there exists
    $\phi\in \tn{NN}(\NNinput=d;\NNmaxwidth\le N;\NNlayer\le L)$ such that
    \[
    \|f-\phi\|_{L^\infty([0,1]^d)} \le C\nu N^{-(2\alpha/d+\rho)}.
    \]
\end{claim}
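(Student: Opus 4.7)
The plan is to obtain Claim~\ref{thm:lowClaim} essentially for free as the logical negation of Theorem~\ref{thm:low}, so it follows immediately from the working assumption introduced just before the claim (``Assuming that Theorem~\ref{thm:low} is not true''). The only non-cosmetic step is to promote an infimum-level bound into the existence of a concrete realizing network.

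First I would negate Theorem~\ref{thm:low} carefully. Writing out its quantifier structure, Theorem~\ref{thm:low} asserts that for every $L\in\N^+$, every $\rho>0$ and every $C>0$ there exists $f\in\tn{Lip}(\nu,\alpha,d)$ such that for every $N_0>0$ one can find $N\geq N_0$ with
\[
\inf_{\phi\in\NN(\NNinput=d;\NNmaxwidth\leq N;\NNlayer\leq L)} \|\phi-f\|_{L^\infty([0,1]^d)}\;\geq\; C\nu N^{-(2\alpha/d+\rho)}.
\]
Negating this yields the existence of $L_0\in\N^+$, $\rho_0>0$ and $C_0>0$ such that for every $f\in\tn{Lip}(\nu,\alpha,d)$ with $\alpha\in(0,1]$ there is some $N_0>0$ for which, for every $N\geq N_0$,
\[
\inf_{\phi\in\NN(\NNinput=d;\NNmaxwidth\leq N;\NNlayer\leq L_0)} \|\phi-f\|_{L^\infty([0,1]^d)}\;<\; C_0\nu N^{-(2\alpha/d+\rho_0)}.
\]

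Next I would convert this infimum statement into the existence of a witness. Because the strict inequality holds for the infimum, the corresponding sublevel set is nonempty, so for each admissible $N$ I may choose
\[
\phi_N\in\NN(\NNinput=d;\NNmaxwidth\leq N;\NNlayer\leq L_0)\quad\text{with}\quad \|\phi_N-f\|_{L^\infty([0,1]^d)}\leq C_0\nu N^{-(2\alpha/d+\rho_0)}.
\]
Setting $L:=L_0$, $\rho:=\rho_0$, $C:=C_0$ in Claim~\ref{thm:lowClaim} then delivers exactly the asserted conclusion, with the network $\phi:=\phi_N$.

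There is no serious obstacle in this derivation---the entire content is logical, consisting of one quantifier swap and one trivial passage from ``$\inf<x$'' to ``some element is $\leq x$''. The genuine difficulty in the proof of Theorem~\ref{thm:low} lies one step further downstream (and is deferred by the authors to~\cite{Shijun}): one must refute Claim~\ref{thm:lowClaim} by producing an adversarial $f\in\tn{Lip}(\nu,\alpha,d)$ whose approximation by ReLU FNNs of depth $\leq L_0$ and width $\leq N$ is ruled out along an unbounded sequence of $N$, via the near-tight VC-dimension bound of~\cite{pmlr-v65-harvey17a} together with the counting/shattering strategy of Theorem~4 in~\cite{yarotsky2017}. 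That contradiction, however, is outside what Claim~\ref{thm:lowClaim} itself asks us to establish.
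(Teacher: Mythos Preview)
Your proposal is correct and matches the paper's approach exactly: the paper introduces Claim~\ref{thm:lowClaim} precisely as the logical negation of Theorem~\ref{thm:low} under the working assumption ``Theorem~\ref{thm:low} is not true,'' and you have simply spelled out the quantifier negation and the trivial passage from a strict infimum bound to a realizing witness that the paper leaves implicit. Your closing remark is also accurate---the substantive work lies in deriving the contradiction via VC-dimension, which the paper sketches and defers to~\cite{Shijun}.
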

If this claim is true, then we have a better approximation rate. So we need to disproof this claim in order to prove Theorem \ref{thm:low}.

Without loss of generality, we assume $\nu=1$; in the case of $\nu\neq 1$, the proof is similar by rescaling $f\in \tn{Lip}(\nu,\alpha,d)$ and FNNs with $\nu$. Let us denote the VC dimension of a function set $\mathcal{F}$ by $\tn{VCDim} (\mathcal{F})$. By \cite{pmlr-v65-harvey17a}, there exists $C_1>0$ such that
\begin{equation*}
\begin{split}
&\quad \tn{VCDim} \big(\tn{NN} (\NNinput=d;\NNmaxwidth\le N; \NNlayer\le L)\big)\\
& \le C_1\big ( (LN+d+2)(N+1)\big)L\ln \big ( (LN+d+2)(N+1)\big)\coloneqq b_u,
\end{split}
\end{equation*}
which comes from the fact the number of parameter of a ReLU FNN in $\NN(\NNinput=d;\NNmaxwidth\le N;\NNlayer\le L)$ is less than $(LN+d+2)(N+1)$.

One can estimate a lower bound of 
\begin{equation*}
\label{eqn:vcd}
\tn{VCDim} \big(\tn{NN} (\NNinput=d;\NNmaxwidth\le N;\NNlayer\le L)\big)
\end{equation*}
using Claim \ref{thm:lowClaim},
and this lower bound can be $b_\ell \coloneqq \lfloor N^{2/d+\rho/(2\alpha)}\rfloor ^d$, which is asymptotically larger than
\begin{equation*}
    \label{vcdUB}
    b_u:=C_1\big ( (LN+d+2)(N+1)\big)L\ln \big( (LN+d+2)(N+1)\big)=\OO(N^2\ln N),
\end{equation*}
 leading to a contradiction that disproves the assumption that ``Theorem \ref{thm:low} is not true". 
\end{proof}

Theorem \ref{thm:up} shows that the $N$-term approximation rate via two or three-hidden-layer ReLU FNNs can achieve $\OO(N^{-2\alpha/d})$, while Theorem \ref{thm:low} shows that the rate cannot be improved to $\OO(N^{-(2\alpha/d+\rho)})$ for any $\rho>0$. 
It was conjectured in the literature that function compositions can improve the approximation capacity exponentially. For general continuous functions, Theorem \ref{thm:low} shows that this conjecture is not true, i.e., if the depth of the composition is $L=\OO(1)$, the approximation rate cannot be better than $\OO(N^{-2\alpha/d})$, not to mention $\OO(N^{-L\alpha /d})$, which implies that adding one more layer cannot improve the approximation rate when $N$ is large and $L>2$.

Following the same proof as in Theorem \ref{thm:low}, we have the following corollary, which shows that the result in Corollary \ref{coll:upp} cannot be improved. 

\begin{corollary}
\label{coll:lowerBound}
$\forall\ \rho>0$, $C>0$ and $L\in \N^+$, $\exists\  N_0(\rho,C,L)$ such that for any integer $N\ge N_0$, $\tn{CPL}(CN^{2+\rho})$ is not contained in the closure of $\tn{NN}(\NNinput=1;\NNmaxwidth\le N;\NNlayer\le L)$ in the sense of $L^{\infty}$-norm.
\end{corollary}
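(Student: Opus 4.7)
The plan is to argue by contradiction via a VC-dimension comparison, exactly as in the proof of Theorem \ref{thm:low} but with $\tn{CPL}(K)$ playing the role of a highly shatterable subclass in place of the H\"older witnesses used there. Suppose the statement fails, so there exist $\rho>0$, $C>0$, $L\in\N^+$ and an unbounded sequence of integers $N$ for which $\tn{CPL}(CN^{2+\rho})$ lies in the $L^\infty$-closure of $\tn{NN}(\NNinput=1;\NNmaxwidth\le N;\NNlayer\le L)$. I will derive a contradiction at one such (sufficiently large) $N$ drawn from this sequence.

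The first key observation is that sign-shattering passes through $L^\infty$-closure in the obvious direction: if $f\in \tn{CPL}(K)$ attains values bounded away from $0$ on a finite set $S\subseteq[0,1]$, then any $\phi\in\tn{NN}(\NNinput=1;\NNmaxwidth\le N;\NNlayer\le L)$ sufficiently close to $f$ in $L^\infty$-norm agrees with $f$ in sign on $S$. Hence the assumed inclusion forces
\[
\tn{VCDim}\bigl(\tn{NN}(\NNinput=1;\NNmaxwidth\le N;\NNlayer\le L)\bigr)\ge \tn{sh}\bigl(\tn{CPL}(CN^{2+\rho})\bigr),
\]
where $\tn{sh}$ denotes the largest cardinality of a set strictly shattered by the class in the above sense. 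A direct construction shows $\tn{sh}(\tn{CPL}(K))\ge \lfloor K/2\rfloor$: take a uniform partition of $[0,1]$ into $K-1$ pieces, place one test point in every other piece, and realize an arbitrary sign pattern by setting the bracketing break points of each chosen piece to $\pm 1$ while leaving the remaining break points at $0$; continuity is automatic because the ``bridge'' pieces in between linearly interpolate between $0$ and the assigned $\pm 1$ values.

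Combining this with the Harvey--Liaw--Mehrabian VC-dimension upper bound already used in the proof of Theorem \ref{thm:low},
\[
\tn{VCDim}\bigl(\tn{NN}(\NNinput=1;\NNmaxwidth\le N;\NNlayer\le L)\bigr)=\OO(N^2\ln N)
\]
for fixed $L$, I obtain $\Omega(N^{2+\rho})\le \OO(N^2\ln N)$, which is false for all sufficiently large $N$ since $\rho>0$, giving the desired contradiction. The main technical hurdle is the sign-shattering lower bound for $\tn{CPL}(K)$; once that explicit ``bump'' construction is written down, the rest is just invoking a result already used in the proof of Theorem \ref{thm:low} together with the elementary $L^\infty$-closure stability of strict sign patterns.
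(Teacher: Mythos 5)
Your proposal is correct and follows essentially the same route as the paper, which proves this corollary by the same VC-dimension contradiction used for Theorem \ref{thm:low}: transfer sign patterns realized by many-piece CPL functions (in place of the H\"older witnesses) through the $L^\infty$-closure to the ReLU class, and contradict the $\OO(N^2\ln N)$ upper bound of Harvey--Liaw--Mehrabian. Your explicit $\lfloor K/2\rfloor$ bump construction and the sign-stability-under-closure observation are exactly the details the paper leaves to the reader, so no further changes are needed.
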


\subsection{Approximation and Computation Efficiency in Parallel Computing}
\label{sub:oar}

In this section, we will discuss the efficiency of the $N$-term approximation via ReLU FNNs in parallel computing. This is of more practical interest than the optimal approximation rate purely based on the number of parameters of the nonlinear approximation since it is impractical to use FNNs without parallel computing in real applications. Without loss of generality, we assume $\nu=1$, $N\gg 1$, and $d\gg 1$. 

Let us summarize standard statistics of the time and memory complexity in parallel computing \cite{Kumar:2002:IPC:600009} in one training iteration of ReLU FNNs with $\OO(N)$ width and $\OO(L)$ depth using $m$ computing cores and $\OO(1)$ training data samples per iteration. Let $T_s(N,L,m)$ and $T_d(N,L,m)$ denote the time complexity in shared and distributed memory parallel computing, respectively. Similarly, $M_s(N,L,m)$ and $M_d(N,L,m)$ are the memory complexity for shared and distributed memory, respectively. $M_s(N,L,m)$ is the total memory requirement; while $M_d(N,L,m)$ is the memory requirement per computing core. Then

\begin{equation}
\label{eqn:Ts}
T_s(N,L,m)=\left\{
\begin{array}{ll}
\mathcal{O}\big(L(N^{2}/m+\ln\tfrac{m}{N})\big),   &   m\in [1,N^2], \\
\mathcal{O}(L\ln N) ,  &   m\in (N^2,\infty);   \\
\end{array}
\right.
\end{equation}
\begin{equation}
\label{eqn:Td}
T_d(N,L,m)=\left\{
\begin{array}{ll}
\mathcal{O}\big(L(N^2/m+t_s\ln m+\tfrac{t_w N}{\sqrt{m}}\ln m)\big),   &   m\in [1,N^2], \\
\mathcal{O}(L\ln N) ,  &   m\in (N^2,\infty);   \\
\end{array}
\right.
\end{equation}
\begin{equation}\label{eqn:Ms}
M_s(N,L,m)=\mathcal{O}(LN^{2}),\quad \tn{for all } m\in \N^+ ;
\end{equation}
and
\begin{equation}\label{eqn:Md}
M_d(N,L,m)=
\mathcal{O}(LN^{2}/m+1),\quad \tn{for all } m\in \N^+ ,
\end{equation}
where $t_s$ and $t_w$ are the ``start-up time'' and ``per-word transfer time'' in the data communication between different computing cores, respectively (see \cite{Kumar:2002:IPC:600009} for a detailed introduction). 
 
In real applications, a most frequently asked question would be: given a target function $f\in \tn{Lip}(\nu,\alpha,d)$, a target approximation accuracy $\epsilon$, and a certain amount of computational resources, e.g., $m$ computer processors, assuming the computer memory is enough, what is a good choice of FNN architecture we should use to reduce the running time of our computers? Certainly, the answer depends on the number of processors $m$ and ideally we hope to increase $m$ by a factor of $r$ to reduce the time (and memory in the distributed environment) complexity by the same factor $r$, which is the scalability of parallel computing. 
 
 We answer the question raised just above using FNN architectures that almost have a uniform width since the optimal approximation theory of very deep FNNs \cite{yarotsky18a} and this manuscript both utilize a nearly uniform width. Combining the theory in \cite{yarotsky18a} and ours, we summarize several statistics of ReLU FNNs in parallel computing in Table \ref{tab1} and \ref{tab2} when FNNs nearly have the same approximation accuracy. For shared memory parallel computing, from Table \ref{tab1} we see that: if computing resources are enough, shallower FNNs with $\OO(1)$ hidden layers require less and even exponentially less running time than very deep FNNs; if computing resources are limited, shallower FNNs might not be applicable or are slower, and hence very deep FNNs are a good choice. For distributed memory parallel computing, the conclusion is almost the same by Table \ref{tab2}, except that the memory limit is not an issue for shallower FNNs if the number of processors is large enough. In sum, if the approximation rate of very deep FNNs is not exponentially better than shallower FNNs, very deep FNNs are less efficient than shallower FNNs theoretically if computing resources are enough. 
 
 \begin{table}[ht]
    \label{tab1}
    \caption{The comparison of approximation and computation efficiency of different ReLU FNNs in shared memory parallel computing with $m$ processors when FNNs nearly have the same approximation accuracy. The analysis is asymptotic in  $N$ and is optimal up to a log factor ($N\gg d\gg 1$); ``running time'' in this table is the time spent on each training step with $\mathcal{O}(1)$ training samples. }
    \centering  
    \resizebox{\textwidth}{!}{ 
        \begin{tabular}{cccc} 
            \toprule
            &  $\tn{NN}(\NNwidth= [2d\lfloor N^{2/d}\rfloor,2N,2N])$ & $\tn{NN}(\NNwidth=[N]^L)$ & $\tn{NN}(\NNwidth=[2d+10]^N)$ \\
            
            \midrule
            accuracy $\epsilon$  &  $\mathcal{O}(\sqrt{d}N^{-2\alpha/d})$ &  $\mathcal{O}(C(d,L)N^{-2\alpha/d})$ & $\mathcal{O}(C(d)N^{-2\alpha/d})$\\
            
            number of weights  &   $\mathcal{O}(N^2)$ &  $\mathcal{O}(LN^2)$ & $\mathcal{O}(d^2N)$ \\    
            
            number of nodes &  $\mathcal{O}(N)$ & $\mathcal{O}(LN)$  & $\mathcal{O}(dN)$ \\
            
            running time for $m\in [1,(2d+10)^2]$ &  $\mathcal{O}(N^2/m)$ & $\mathcal{O}(LN^2/m)$
            & $\mathcal{O}(N(d^2/m+\ln \tfrac{m}{d}))$\\
            
            running time for $m\in ((2d+10)^2,N^2]$ &  $\mathcal{O}(N^2/m+\ln\tfrac{m}{N})$ & $\mathcal{O}(L(N^2/m+\ln \tfrac{m}{N}))$
            & $\mathcal{O}(N\ln d)$\\
            
            running time for $m\in (N^2,\infty)$&  $\mathcal{O}(\ln N)$ & $\mathcal{O}(L\ln N)$
            & $\mathcal{O}(N\ln d)$\\
            
            total memory &  $\mathcal{O}(N^2)$ &  $\mathcal{O}(LN^2)$ & $\mathcal{O}(d^2N)$ \\
            \bottomrule
        \end{tabular} 
    }
\end{table} 

\begin{table}[ht]
    \label{tab2}
    \caption{The comparison of approximation and computation efficiency of different ReLU FNNs in distributed memory parallel computing with $m$ processors when FNNs nearly have the same approximation accuracy. The analysis is asymptotic in $N$ and is optimal up to a log factor ($N\gg d\gg 1$); ``running time'' in this table is the time spent on each training step with $\mathcal{O}(1)$ training samples. } 
    \centering  
    \resizebox{\textwidth}{!}{ %
        \begin{tabular}{cccc} 
            \toprule
            &  $\tn{NN}(\NNwidth= [2d\lfloor N^{2/d}\rfloor,2N,2N])$ & $\tn{NN}(\NNwidth=[N]^L)$ & $\tn{NN}(\NNwidth=[2d+10]^N)$ \\
            
            \midrule
            accuracy $\epsilon$  &  $\mathcal{O}(\sqrt{d}N^{-2\alpha/d})$ &  $\mathcal{O}(C(d,L)N^{-2\alpha/d})$ & $\mathcal{O}(C(d)N^{-2\alpha/d})$\\
            
            number of weights  &   $\mathcal{O}(N^2)$ &  $\mathcal{O}(LN^2)$ & $\mathcal{O}(d^2N)$ \\    

            number of nodes &  $\mathcal{O}(N)$ & $\mathcal{O}(LN)$  & $\mathcal{O}(dN)$ \\

            running time for $m\in [1,(2d+10)^2]$ &  $\mathcal{O}(N^2/m+t_s\ln m+\tfrac{t_w N}{\sqrt{m}}\ln m)$ & $\mathcal{O}(L(N^2/m+t_s\ln m+\tfrac{t_w N}{\sqrt{m}}\ln m))$
            & $\mathcal{O}(N(d^2/m+t_s\ln m+\tfrac{t_w d}{\sqrt{m}}\ln m))$\\
            
            running time for $m\in ((2d+10)^2,N^2]$ &  $\mathcal{O}(N^2/m+t_s\ln m+\tfrac{t_w N}{\sqrt{m}}\ln m)$ & $\mathcal{O}(L(N^2/m+t_s\ln m+\tfrac{t_w N}{\sqrt{m}}\ln m))$
            & $\mathcal{O}(N\ln d)$\\
            
            running time for $m\in (N^2,\infty)$&  $\mathcal{O}(\ln N)$ & $\mathcal{O}(L\ln N)$
            & $\mathcal{O}(N\ln d)$\\
            
            memory per processor &  $\mathcal{O}(N^2/m+1)$ &  $\mathcal{O}(LN^2/m+1)$ & $\mathcal{O}(d^2N/m+1)$ \\
            \bottomrule
        \end{tabular} 
    }%
\end{table}

\section{Numerical Experiments}
\label{sec:numerical}

In this section, we provide two sets of numerical experiments to compare different ReLU FNNs using shared memory GPU parallel computing. The numerical results for distributed memory parallel computing would be similar. All numerical tests were conducted using Tensorflow and an NVIDIA P6000 GPU with 3840 CUDA parallel processing cores.

Since it is difficult to generate target functions $f\in \tn{Lip}(\nu,\alpha,d)$ with fixed $\nu$ and $\alpha$, we cannot directly verify the nonlinear approximation rate, but we are able to observe numerical evidence close to our theoretical conclusions. Furthermore, we are able to verify the running time estimates in Section \ref{sub:oar} and show that, to achieve the same theoretical approximation rate, shallow FNNs are more efficient than very deep FNNs.

In our numerical tests, we generate $50$ random smooth functions as our target functions using the algorithm in \cite{filip:hal-01944992} with a wavelength parameter $\lambda=0.1$ and an amplitude parameter $\sqrt{(2/\lambda)}$ therein. These target functions are uniformly sampled with $20000$ ordered points $\{x_i\}$ in $[0,1]$ to form a data set. 
The training data set consists of samples with odd indices $i$'s, while the test data set consists of samples with even indices $i$'s. The loss function is defined as the mean square error between the target function and the FNN approximant evaluated on training sample points. The ADAM algorithm \cite{ADAM} with a decreasing learning rate from $0.005$ to $0.0005$, a batch size $10000$, and a maximum number of epochs $20000$, is applied to minimize the mean square error. The minimization is randomly initialized by the ``normal initialization method"\footnote{See \url{https://medium.com/prateekvishnu/xavier-and-he-normal-he-et-al-initialization-8e3d7a087528}.}.
The test error is defined as the mean square error evaluated on test sample points. The training and test data sets are essentially the same in our numerical test since we aim at studying the approximation power of FNNs instead of the generalization capacity of FNNs. Note that due to the high non-convexity of the optimization problem, there might be chances such that the minimizers we found are bad local minimizers. Hence, we compute the average test error of the best $40$ tests among the total $50$ tests of each architecture. 

To observe numerical phenomena in terms of $N$-term nonlinear approximation,  in the first set of numerical experiments, we use two types of FNNs to obtain approximants: the first type has $L=\OO(1)$ layers with different sizes of width $N$; the second type has a fixed width $N=12$ with different numbers of layers $L$. Numerical results are summarized in Table \ref{tab:ShallowVSDeepFixWidth}. To observe numerical phenomena in terms of the number of parameters in FNNs, in the second set of numerical experiments, we use FNNs with the same number of parameters but different sizes of width $N$ and different numbers of layers $L$. Numerical results are summarized in Table \ref{tab:ShallowVSDeepFixParameter}. 

\begin{table}    
    \caption{ Comparison between $\tn{NN}(\NNinput=1;\NNwidth=[N]^L)$ and  $\tn{NN}(\NNinput=1;\NNwidth=[12]^N)$ for $N=32,64,128$ and $L=2,4,8$. ``Time'' in this table is the total running time spent on $20000$ training steps with training batch size $10000$, and the unit is second(s).} 
    \label{tab:ShallowVSDeepFixWidth}
    \centering 
    \resizebox{0.8\textwidth}{!}{ %
        \begin{tabular}{cccccccc} 
            \toprule
            $N$  & depth & width &  test error & improvement ratio & \#parameter & time & \\
            
            \midrule
            $32$  & $2$ & $32$ & 
            $8.06\times 10^{-2}$ &  -- &1153& $3.09\times 10^{1}$\\
            
            $32$  & $4$  & $32$ & 
            $3.98\times 10^{-4}$ & --&3265 &
            $3.82\times 10^{1}$\\
            
            $32$  & $8$  & $32$ & 
            $1.50\times 10^{-5}$ & --&7489 &
            $5.60\times 10^{1}$\\
            
            \midrule
            
            $32$  & $32$  & $12$ & 
            $1.29\times 10^{-3}$ & -- &4873 & $1.27\times 10^{2}$ \\
            
            \midrule
            $64$  & $2$ & $64$ & 
            $2.51\times 10^{-2}$ &  $3.21$ & 4353 &
            $3.45\times 10^{1}$\\
            
            $64$  & $4$  & $64$ & 
            $4.27\times 10^{-5}$ & $9.32$ & 12673 &
            $5.00\times 10^{1}$\\
            
            $64$  & $8$  & $64$ &  
            $2.01\times 10^{-6}$ &$7.46$&29313 &
            $7.91\times 10^{1}$\\
            
            \midrule
            
            $64$  & $64$  & $12$ & 
            $1.16\times 10^{-1}$ & $0.01$ &9865& $2.37\times 10^{2}$\\
            
            \midrule
            $128$  & $2$ & $128$ & 
            $2.04\times 10^{-3}$ &  $12.3$&16897 &
            $5.03\times 10^{1}$\\
            
            $128$  & $4$  & $128$ & 
            $1.05\times 10^{-5}$ & $4.07$&49921 &
            $8.21\times 10^{1}$\\
            
            $128$  & $8$  & $128$ &  
            $1.47\times 10^{-6}$ & $1.37$&115969 &
            $1.41\times 10^{2}$\\
            
            \midrule
            
            $128$  & $128$  & $12$ & 
            $3.17\times 10^{-1}$ & $0.37$ &19849 & $4.47\times 10^{2}$\\
            
            \bottomrule
        \end{tabular} 
    }%
\end{table}

By the last columns of Table \ref{tab:ShallowVSDeepFixWidth}, we verified that as long as the number of computing cores $m$ is larger than or equal to $N^2$, the running time per iteration of FNNs with $\OO(1)$ layers is $\OO(\ln N)$, while the running time per iteration of FNNs with $\OO(N)$ layers and $\OO(1)$ width is $\OO(N)$. By the last columns of Table \ref{tab:ShallowVSDeepFixParameter}, we see that when the number of parameters is the same, very deep FNNs requires much more running time per iteration than shallower FNNs and the difference becomes more significant when the number of parameters increases. Hence, very deep FNNs are much less efficient than shallower FNNs in parallel computing. 

Besides, by Table \ref{tab:ShallowVSDeepFixWidth} and \ref{tab:ShallowVSDeepFixParameter}, the test error of very deep FNNs cannot be improved if the depth is increased and the error even becomes larger when depth is larger. However, when the number of layers is fixed, increasing width can reduce the test error. More quantitatively, we define the \textit{improvement ratio} of an FNN with width $N$ and depth $L$ in Table \ref{tab:ShallowVSDeepFixWidth} as the ratio of the test error of an FNN in $\tn{NN}(\NNinput=1;\NNwidth=[N/2]^L)$ (or $\tn{NN}(\NNinput=1;\NNwidth=[N]^{L/2})$) over the test error of the current FNN in $\tn{NN}(\NNinput=1;\NNwidth=[N]^L)$. Similarly, the improvement ratio of an FNN with a number of parameters $W$ in Table \ref{tab:ShallowVSDeepFixParameter} is defined as the ratio of the test error of an FNN with the same type of architecture and a number of parameters $W/2$ over the test error of the current FNN. According to the improvement ratio in Table \ref{tab:ShallowVSDeepFixWidth} and \ref{tab:ShallowVSDeepFixParameter}, when $L=\OO(1)$, the numerical approximation rate in terms of $N$ is in a range between $2$ to $4$. Due to the high non-convexity of the deep learning optimization and the difficulty to generate target functions of the same class with a fixed order $\alpha$ and constant $\nu$, we cannot accurately verify the approximation rate. But the statistics of the improvement ratio can roughly reflect the approximation rate and the numerical results stand in line with our theoretical analysis. 

\begin{table}[ht]
    \caption{ Comparison between shallow FNNs and deep FNNs when the total number of parameters (\#parameter) is fixed. ``Time'' in this table is the total running time spent on $20000$ training steps with training batch size $10000$, and the unit is second(s).} 
    \label{tab:ShallowVSDeepFixParameter}
    \centering 
    \resizebox{0.74\textwidth}{!}{ 
        \begin{tabular}{cccccccc} %
            \toprule
            \#parameter & depth & width &  test error & improvement ratio & time \\
            
            \midrule
            $5038$  & $2$ & $69$ & 
            $1.13\times 10^{-2}$ &  -- & $3.84\times 10^{1}$\\
            
            $5041$  & $4$  & $40$ & 
            $1.65\times 10^{-4}$ & --&
            $3.80\times 10^{1}$\\
            
            $4993$  & $8$  & $26$ &  
            $1.69\times 10^{-5}$ & --&
            $5.07\times 10^{1}$\\
            
            \midrule
            
            $5029$  & $33$  & $12$ & 
            $4.77\times 10^{-3}$ & -- & $1.28\times 10^{2}$ \\
            
            \midrule
            $9997$  & $2$ & $98$ & 
            $4.69\times 10^{-3}$ &  $2.41$&
            $4.40\times 10^{1}$\\
            
            $10090$  & $4$  & $57$ & 
            $7.69\times 10^{-5}$ & $2.14$&
            $4.67\times 10^{1}$\\
            
            $9954$  & $8$  & $37$ &  
            $7.43\times 10^{-6}$ &$2.27$&
            $5.92\times 10^{1}$\\
            
            \midrule
            
            $10021$  & $65$  & $12$ & 
            $2.80\times 10^{-1}$ & $0.02$ & $2.31\times 10^{2}$\\
            
            \midrule
            $19878$  & $2$ & $139$ & 
            $1.43\times 10^{-3}$ &  $3.28$&
            $5.18\times 10^{1}$\\
            
            $20170$  & $4$  & $81$ & 
            $2.30\times 10^{-5}$ & $3.34$&
            $6.26\times 10^{1}$\\
            
            $20194$  & $8$  & $53$ &  
            $2.97\times 10^{-6}$ & $2.50$&
            $7.08\times 10^{1}$\\
            
            \midrule
            
            $20005$  & $129$  & $12$ &  
            $3.17\times 10^{-1}$ & $0.88$ & $4.30\times 10^{2}$\\
            
            \bottomrule
        \end{tabular} 
    }%
\end{table}

\section{Conclusions}
\label{sec:con}
 \textcolor{black}{We studied the approximation and computation efficiency of nonlinear approximation via compositions, especially when the dictionary $\mathcal{D}_L$ consists of deep ReLU FNNs with width $N$ and depth $L$. Our main goal is to quantify the best $N$-term approximation rate $\epsilon_{L,f}(N)$ for the dictionary $\mathcal{D}_L$ when $f$ is a H{\"o}lder continuous function. This topic is related to several existing approximation theories in the literature, but they cannot be applied to answer our problem. By introducing new analysis techniques that are merely based on the FNN structure instead of traditional approximation basis functions used in existing work, we have established a new theory to address our problem. }
 
 \textcolor{black}{In particular, for any function $f$ on $[0,1]$, regardless of its smoothness and even the continuity, if $f$ can be approximated using a dictionary when $L=1$ with the best $N$-term approximation rate $\epsilon_{L,f}=\OO(N^{-\eta})$, we showed that dictionaries with $L=2$ can improve the best $N$-term approximation rate to $\epsilon_{L,f}=\OO(N^{-2\eta})$. We also showed that for H{\"o}lder continuous functions of order $\alpha$ on $[0,1]^d$, the application of a dictionary with $L=3$ in nonlinear approximation can achieve an essentially tight best $N$-term approximation rate $\epsilon_{L,f}=\OO( N^{-2\alpha/d})$, and increasing $L$ further cannot improve the approximation rate in $N$. Finally, we showed that dictionaries consisting of wide FNNs with a few hidden layers are more attractive in terms of computational efficiency than dictionaries with narrow and very deep FNNs for approximating H{\"o}lder continuous functions if the number of computer cores is larger than $N$ in parallel computing. }

\textcolor{black}{Our results were based on the $L^1$-norm in the analysis of constructive approximations in Section \ref{sub:upp}; while we used $L^\infty$-norm in the unachievable approximation rate in Section \ref{sub:low}. In fact, we can define a new norm that is weaker than the $L^\infty$-norm and stronger than the $L^1$-norm such that all theorems in this paper hold in the same norm. The analysis based on the new norm can be found in \cite{Shijun} and our future work, and it shows that the approximation rate in this paper is tight in the same norm. Finally, although the current result is only valid for H{\"o}lder continuous functions, it is easy to generalize it to general continuous functions by introducing the moduli of continuity, which is also left as future work.}

\section*{Acknowledgments}
H. Yang thanks the Department of Mathematics at the National University of Singapore for the startup grant, the Ministry of Education in Singapore for the grant MOE2018-T2-2-147, NVIDIA Corporation for the donation of the Titan Xp GPU, and NSCC \cite{nscc} for extra GPU resource. We would like to acknowledge the editor and reviewers for their help in improving this manuscript.

\bibliographystyle{apacite}
\bibliography{references}

\end{document}